\newtheorem{theorem}{Theorem}
\newtheorem{lemma}{Lemma}
\newtheorem{corollary}{Corollary}
\newcommand{\argmax}[1]{{\hbox{$\underset{#1}{\operatorname{argmax}}\;$}}}
\begin{document}

\title{Average-Case Active Learning with Costs}

\author{Andrew Guillory 
\thanks{This material is based upon work supported by the
National Science Foundation under grant IIS-0535100 and
by an ONR MURI grant N000140510388} \\
Computer Science and Engineering \\
University of Washington \\
\tt{guillory@cs.washington.edu} \\
\and Jeff Bilmes \\
Electrical Engineering \\
University of Washington \\
\tt{bilmes@ee.washington.edu}}

\reportmonth{May}
\reportyear{2009}
\reportnumber{0005}



\makecover

\maketitle

\begin{abstract}
We analyze the expected cost of a greedy active learning algorithm.
Our analysis extends previous work to a more general setting in
which different queries have different costs.  Moreover, queries may
have more than two possible responses and the distribution over
hypotheses may be non uniform.  Specific applications include active
learning with label costs, active learning for multiclass and
partial label queries, and batch mode active learning.  We also discuss an
approximate version of interest when there are very many queries.
\end{abstract}

\section{Motivation}
We first motivate the problem by describing it informally.
Imagine two people are playing a variation of twenty questions.  Player 1
selects an object from a finite set, and it is up to
player 2 to identify the selected object by asking questions chosen
from a finite set.  We assume for every object and
every question the answer is unambiguous: each question maps each
object to a single answer.  Furthermore, each question has associated
with it a cost, and the goal of player 2 is to identify
the selected object using a sequence of questions with minimal cost.
There is no restriction that the questions are yes or no questions.  
Presumably, complicated, more
specific questions have greater costs.  It doesn't violate the rules
to include a single question enumerating all the objects (Is the
object a dog or a cat or an apple or...), but for the game to be
interesting it should be possible to identify the object using a
sequence of less costly questions. 

With player 1 the human expert and player 2 the learning algorithm, we
can think of active learning as a game of twenty questions.  The set
of objects is the hypothesis class, the selected object is the optimal
hypothesis with respect to a training set, and the questions available
to player 2 are label queries for data points in the finite sized
training set.  Assuming the data set is separable, label queries are
unambiguous questions (i.e. each question has an unambiguous answer).
By restricting the hypothesis class to be a set of possible labellings
of the training set (i.e. the effective hypothesis class for some
other possibly infinite hypothesis class), we can also ensure there is
a unique zero-error hypothesis.  If we set all question costs to 1, we
recover the traditional active learning problem of identifying the
target hypothesis using a minimal number of labels.

However, this framework is also general enough to cover a variety of active
learning scenarios outside of traditional binary classification.
\begin{itemize}
\item \textbf{Active learning with label costs} If different data
points are more or less costly to label, we can model these
differences using non uniform label costs.  For example, if a longer
document takes longer to label than a shorter document,
we can make costs proportional to document length.  The goal is
then to identify the optimal hypothesis as quickly as possible as
opposed to using as few labels as possible. 
This notion of label cost is different than the often studied
notion of misclassification cost.  Label cost refers to the cost of
acquiring a label at training time where misclassification cost
refers to the cost of incorrectly predicting a label at test time.
\item \textbf{Active learning for multiclass and partial
label queries}
We can directly ask for the label of a point 
(Is the label of this point ``a'', ``b'', or
``c''?), or we can ask less specific questions about the label (Is the label
of this point ``a'' or some other label?).  We can also mix these question 
types, presumably making less specific questions less costly.  These
kinds of partial label queries are particularly important when examples
have structured labels.  In a parsing problem, a partial
label query could ask for the portion of a parse tree corresponding
to a small phrase in a long sentence.
\item \textbf{Batch mode active learning} Questions can also be
queries for multiple labels.  In the extreme case, there can be a
question corresponding to every subset of possible single data point
questions.  Batch label queries only help the algorithm reduce total
label cost if the cost of querying for a batch of labels is in some
cases less than the of sum of the corresponding individual label
costs.  This is the case if there is a constant
additive cost overhead associated with asking a question or if we
want to minimize time spent labeling and there
are multiple labelers who can label examples in parallel.
\end{itemize}
Beyond these specific examples, this setting applies to any active
learning problem for which different user interactions have different
costs and are unambiguous as we have defined.  For example, we can ask
questions concerning the percentage of positive and negative examples
according to the optimal classifier (Does the optimal classifier label
more than half of the data set positive?).  This abstract setting also
has applications outside of machine learning.
\begin{itemize}
\item \textbf{Information Retrieval} We can think of a question asking
strategy as an index into the set of objects which can then be used
for search. If we make the cost of a question the expected
computational cost of computing the answer for a given object, then
a question asking strategy with low cost corresponds to an index
with fast search time.  For example, if objects correspond to points
in $\Re^n$ and questions correspond to axis aligned hyperplanes, a
question asking strategy is a $kd$-tree.
\item \textbf{Compression} A question asking strategy produces a
unique sequence of responses for each object.  If we make the cost
of a question the log of the number of possible
responses to that question,
then a question asking strategy with low cost
corresponds to a code book for the set of objects 
with small code length \citep{coverthomas}.
\end{itemize}
Interpreted in this way, active learning, information retrieval, and
compression can be thought of as variations of the same problem
in which we minimize interaction cost, computation cost, and code length
respectively.  

In this work we consider this general problem for average-case cost.
The object is selected at random and the goal is to minimize the
expected cost of identifying the selected object.  The distribution from which
the object is drawn is known but may not be uniform.  Previous work
\citep{optimalsplit, greedy, approxoptimal, 
decisiontreesentity, approxdecision}
has shown simple greedy algorithms are approximately optimal
in certain more restrictive settings.  We extend these results to our
more general setting.  

\section{Preliminaries}

\begin{figure}[t]
\label{tree}
\centering
\includegraphics[width=.5\textwidth]{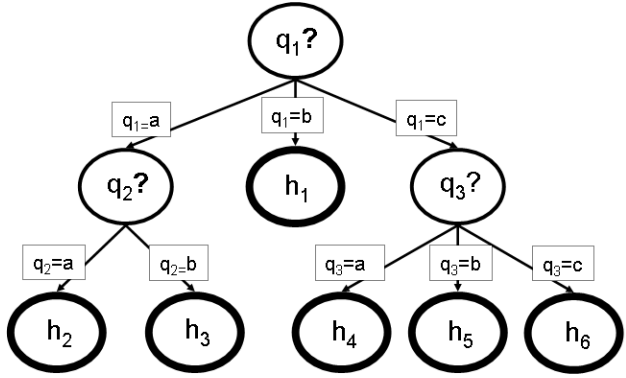}
\caption{Decision tree view of active learning.  Internal nodes
are questions (label queries), branches are answers (label values),
and leaves are target objects (hypotheses).  The cost of identifying
a target object is the sum of the question costs along the
path from the root to that object.}
\end{figure}

We first review the main result of \citet{greedy} which our first
bound extends.  We assume we have a finite set of objects (for example
hypotheses) $H$ with $|H|=n$.  A randomly chosen $h^* \in H$ is our
target object with a known positive $\pi(h)$ defining the
distribution over $H$ by which $h^*$ is drawn.  We assume $\min_h
\pi(h) > 0$ and $|H|>1$.  We also assume there is a finite set of
questions $q_1, q_2, ... q_m$ each of which has a positive cost $c_1,
c_2,... c_m$.  Each question $q_i$ maps each object to a response from
a finite set of answers $A \triangleq \bigcup_{h,i} \{ q_i(h) \}$
and asking $q_i$ reveals $q_i(h^*)$, eliminating from consideration
all objects $h$ for which $q_i(h) \neq q_i(h^*)$.  An active learning
algorithm continues asking questions until $h^*$ has been identified
(i.e. we have eliminated all but one of the elements from $H$).  We
assume this is possible for any element in $H$.  The goal of the
learning algorithm is to identify $h^*$ with questions incurring as
little cost as possible.  Our result bounds the expected cost of
identifying $h^*$.

We assume that the distribution $\pi$, the hypothesis class $H$, the
questions $q_i$, and the costs $c_i$ are known.  Any deterministic
question asking strategy (e.g. a deterministic active learning
algorithm taking in this known information) produces a decision tree
in which internal nodes are questions and the leaves are elements of
$H$.  The \emph{cost} of a \emph{query tree} $T$ with respect to a
distribution $\pi$, $C(T, \pi)$, is defined to be the expected cost of
identifying $h^*$ when $h^*$ is chosen according to $\pi$.  We can
write $C(T, \pi)$ as $C(T, \pi) = \sum_{h \in H} \pi(h) c_T(h)$ where
$c_T(h)$ is the cost to identify $h$ as the target object.  $c_T(h)$
is simply the sum of the costs of the questions along the path from
the root of $T$ to $h$.  We define $\pi_S$ to be $\pi$ restricted and
normalized w.r.t.\ $S$.  For $s \in S$, $\pi_S(s) = \pi(s) / \pi(S)$,
and for $s \notin S$, $\pi_S(s) = 0$.  Tree cost decomposes nicely.
\begin{lemma}
For any tree $T$ and any $S = \bigcup_{i} S^i$ with 
$\forall_{i,j} S^i \cap S^j = \emptyset$, $S \neq \emptyset$
\[ C(T, \pi_S) = \sum_{i} \pi_S(S^i) C(T, \pi_{S^i}) \]
\label{decomplem}
\end{lemma}

We define the \emph{version space} to be the subset of $H$
consistent with the answers we have received so far.  Questions eliminate
elements from the version space.  For a question $q_i$ and a particular
version space $S \subseteq H$, we define $S^j \triangleq \lbrace s
\in S : q_i(s)=j \rbrace$.  With this notation the dependence on $q_i$
is suppressed but understood by context.  As shorthand, for a
distribution $\pi$ we define $\pi(S) = \sum_{s \in S} \pi(s)$.  On
average, asking question $q_i$ \emph{shrinks} the absolute mass of $S$ with
respect to a distribution $\pi$ by
\[ \begin{split}
\Delta_i(S,\pi) & \triangleq \sum_{j \in A} \frac{\pi(S^j)}{\pi(S)}(\sum_{k
  \neq j} \pi(S^k)) 
= \pi(S) - \sum_{j \in A}
\frac{\pi(S^j)^2}{\pi(S)} 
\end{split} \] 
We call this quantity the
\emph{shrinkage} of $q_i$ with respect to $(S,\pi)$.  We note
$\Delta_i(S,\pi)$ is only defined if $\pi(S) > 0$.  If $q_i$ has cost
$c_i$, we call $\frac{\Delta_i(S,\pi)}{c_i}$ the \emph{shrinkage-cost
  ratio} of $q_i$ with respect to $(S,\pi)$.  

\begin{algorithm}[t]
\caption{Cost Sensitive Greedy Algorithm}
\begin{algorithmic}[1]
\STATE $S \Leftarrow H$
\REPEAT
\STATE $i = \argmax{i} \Delta_i(S, \pi_S) / c_i$
\STATE $S \Leftarrow \lbrace s \in S : q_i(s) = q_i(h^*) \rbrace$
\UNTIL{$|S| = 1$}
\end{algorithmic}
\label{greedyalg}
\end{algorithm}

In previous work \citep{greedy, approxoptimal, 
decisiontreesentity}, 
the greedy algorithm analyzed is the algorithm that
at each step chooses the question $q_i$ that maximizes the shrinkage
with respect to the current version space $\Delta_i(S, \pi_S)$.  In
our generalized setting, we define the \emph{cost sensitive greedy
  algorithm} to be the active learning algorithm which at each step
asks the question with the largest shrinkage-cost ratio $\Delta_i(S,
\pi_S) / c_i$ where $S$ is the current version space.  We call the
tree generated by this method the greedy query tree.  See Algorithm
\ref{greedyalg}.  \citet{approxoptimal} also analyzed a cost-sensitive
method for the restricted case of 
questions with two responses and uniform $\pi$, and our
method is equivalent to theirs in this case.  The main result of
\citet{greedy} is that, on average, with unit costs and yes/no
questions, the greedy strategy is not much worse than any other
strategy.  We repeat this result here.
\begin{theorem} Theorem 3 \citep{greedy}  If $|A|=2$ and  
$\forall i$ $c_i = 1$, then for any $\pi$ the greedy query tree 
$T^g$ has cost at most 
\[ C(T^g, \pi) \leq 4 C^* \ln 1/(\min_{h \in H} \pi(h)) \] where $C^*
= \min_T C(T, \pi)$. \label{oldthm} \end{theorem}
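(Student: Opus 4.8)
The plan is to reduce the theorem to a single one-step comparison between the greedy query and the optimal tree, and then amortize that comparison over the whole run. First I would record two pieces of bookkeeping. Directly from the definition $c_T(h)=\sum_{v\text{ on the path to }h}c_v$ (with $c_v=1$), the cost of any tree decomposes over its internal nodes as $C(T,\pi)=\sum_v \pi(S_v)$, where $S_v$ is the version space reaching $v$: each query charges cost $1$ to exactly the mass $\pi(S_v)$ that passes through it. Second, since $|A|=2$ the shrinkage collapses to $\Delta_i(S,\pi)=2\pi(S^0)\pi(S^1)/\pi(S)$, so the node $v$ separates ``pair mass'' $2\pi(S^0_v)\pi(S^1_v)=\pi(S_v)\Delta(S_v,\pi)$; summed over any tree that resolves a set $S$ this telescopes to $\pi(S)^2-\sum_{h\in S}\pi(h)^2$, since each unordered pair is split exactly once.

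The crux is the one-step comparison, and this is where I expect the main difficulty. Fix a version space $S$ reached by greedy. Restricting the optimal tree $T^*$ to $S$ (deleting branches inconsistent with $S$ and contracting) yields a tree $T^*_S$ that still resolves $S$, and whose cost satisfies $C(T^*_S,\pi_S)\le C^*/\pi(S)$, because each depth only shrinks and $\sum_{h\in S}\pi(h)\,c_{T^*}(h)\le C^*$. Applying the two bookkeeping facts to $T^*_S$ (using Lemma~\ref{decomplem} for the restricted decomposition) gives $\sum_v \pi(S_v)=\pi(S)\,C(T^*_S,\pi_S)\le C^*$ and $\sum_v \pi(S_v)\Delta(S_v,\pi)=\pi(S)^2-\sum_{h\in S}\pi(h)^2$, the sums running over internal nodes of $T^*_S$. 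A weighted-average (pigeonhole) step then yields a query in $T^*_S$ whose shrinkage on its own node $S_v\subseteq S$ is at least $(\pi(S)^2-\sum_{h\in S}\pi(h)^2)/C^*$. To make this query usable by greedy I would prove a monotonicity fact: enlarging the set on which a fixed binary query acts cannot decrease its shrinkage, since the partial derivatives of $(a{+}x)(b{+}y)/(a{+}b{+}x{+}y)$ in $x,y\ge 0$ are nonnegative. Hence the same query evaluated on all of $S$ is at least as good, and because greedy maximizes $\Delta_i(S,\pi_S)=\Delta_i(S,\pi)/\pi(S)$, the greedy query $q_g$ satisfies
\[ \Delta_g(S,\pi)\;\ge\;\frac{\pi(S)^2-\sum_{h\in S}\pi(h)^2}{C^*}. \]
This ``averaging-plus-transfer'' lemma is the hard part: everything else is bookkeeping, but extracting a query that is simultaneously nearly as useful as $T^*$ and legal for greedy needs both the restriction argument and the monotonicity.

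Finally I would amortize to produce the $\ln(1/\min_h\pi(h))$ factor. I would use the potential $\Phi(S)=\sum_{h\in S}\pi(h)\ln(\pi(S)/\pi(h))$, which vanishes on singletons and satisfies $\Phi(H)=\sum_h\pi(h)\ln(1/\pi(h))\le \ln(1/\min_h\pi(h))$. By concavity each split contributes a nonnegative drop $\Delta\Phi_{S}=\pi(S)\bigl(-a\ln a-(1-a)\ln(1-a)\bigr)$ with $a=\pi(S^0)/\pi(S)$, and these telescope to $\Phi(H)$. The target inequality is $\sum_v \pi(S_v)\le 4C^*\sum_v \Delta\Phi_{S_v}$, which combined with the telescoping gives $C(T^g,\pi)\le 4C^*\ln(1/\min_h\pi(h))$. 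To prove it I would charge each node's cost $\pi(S_v)$ against its potential drop: when the greedy split is reasonably balanced, the binary-entropy drop dominates the shrinkage (indeed $-a\ln a-(1-a)\ln(1-a)\ge 2a(1-a)$), and the key lemma guarantees enough balance; the remaining danger is a nearly trivial split, but there the key lemma forces $\pi(S)^2-\sum_{h\in S}\pi(h)^2$ to be tiny, i.e.\ $S$ is already almost a point mass, so those nodes can be charged by a separate global bound rather than node-by-node. I expect this near-resolved regime to be the delicate step where a naive per-node inequality fails and the constant $4$ (rather than $2$) is paid.
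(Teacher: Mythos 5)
The paper does not actually reprove this statement (it is quoted from \citet{greedy}); the closest in-paper argument is the generalized induction of Theorem \ref{submainthm}. Measured against that, the first two stages of your outline are correct and even attractive: the pair-mass telescoping $\sum_v \pi(S_v)\Delta(S_v,\pi)=\pi(S)^2-\sum_{h\in S}\pi(h)^2$ over the restriction of $T^*$ to $S$, combined with $\sum_v \pi(S_v)=\pi(S)\,C(T^*_S,\pi_S)\le C^*$ and the monotonicity of shrinkage under enlarging the version space (the paper's Lemma \ref{mondeclma}), gives by a weighted-average argument exactly the conclusion of Corollary \ref{shrinkcor}, replacing the inductive Lemma \ref{costlma} with something cleaner.

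The gap is in the amortization, and it is fatal as written. Your node-by-node charge requires $H(a_v)\ge 1/(4C^*)$ at every internal node of the greedy tree, while your key lemma only supplies $H(a_v)\ge 2a_v(1-a_v)=\Delta_g(S_v,\pi_{S_v})\ge (1-\mathsf{CP}(\pi_{S_v}))/C(T^*_{S_v},\pi_{S_v})\ge \pi(S_v)\,(1-\mathsf{CP}(\pi_{S_v}))/C^*$. The per-node inequality therefore needs $\pi(S_v)(1-\mathsf{CP}(\pi_{S_v}))\ge 1/4$, which fails for essentially every node of the tree once $\pi(S_v)$ is small: for uniform $\pi$ and a balanced greedy tree it already fails by depth three, where $\pi_{S_v}$ is nowhere near a point mass. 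You identify the only danger as a "nearly trivial split" forcing $S$ to be "almost a point mass," but a weak guarantee arises just as easily because $\pi(S_v)$ is small while $\pi_{S_v}$ is nearly uniform (and also because the restricted conditional cost $C(T^*_{S_v},\pi_{S_v})$ can greatly exceed $C^*$), so the "separate global bound" you defer to would have to cover the bulk of the tree and is precisely the missing content. The paper (following \citet{greedy}) avoids this by proving the renormalized statement $C(T^g_S,\pi_S)\le 12\,C(T^*,\pi_S)\ln(\pi(S)/\min_{h\in S}\pi(h))$ by induction on $|S|$: comparing against the \emph{conditional} cost $C(T^*,\pi_S)$ at every level cancels the $\pi(S_v)$ factor (the conditional costs recombine to $C^*$ via Lemma \ref{decomplem}), and the genuinely hard regime $\mathsf{CP}(\pi_S)>1/2$ gets a dedicated argument tracking the geometric decay of $\pi(S\setminus\{h_0\})$ along the path to the dominant hypothesis (Lemmas \ref{existcutrlma} and \ref{greedycutrlma}). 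Your outline contains no substitute for either mechanism, so the proof does not close.
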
 
For a uniform, 
$\pi$, the log term becomes $\ln |H|$, so the approximation factor
grows with the log of the number of objects.  In the non uniform case,
the greedy algorithm can do significantly worse.  However,
\citet{optimalsplit} and \citet{decisiontreesentity} show a simple
rounding method can be used to remove dependence on $\pi$ .  We first
give an extension to Theorem \ref{oldthm} to our more general setting.
We then show we how to remove dependence on $\pi$ using a similar
rounding method.  Interestingly, in our setting this rounding method
introduces a dependence on the costs, so neither bound is strictly
better although together they generalize all previous results.

\section{Cost Independent Bound}
\begin{theorem}
For any $\pi$ the greedy query tree $T^g$ has cost 
at most 
\[ C(T^g, \pi) \leq 12 C^* \ln 1/(\min_{h \in H} \pi(h)) \]
where $C^* \triangleq \min_T C(T, \pi)$. \label{mainthm} \end{theorem}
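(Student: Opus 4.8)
The plan is to run the whole argument through the \emph{collision potential} $\Phi(S) \triangleq \pi(S)^2 - \sum_{s \in S}\pi(s)^2 = \sum_{s\neq t \in S}\pi(s)\pi(t)$, the $\pi$-mass of ordered pairs still confusable inside a version space $S$. First I would record two identities. Writing the path cost $c_T(\cdot)$ as a sum over the internal nodes $v$ of $T$ (each carrying a version space $S_v$ and a question $q_{i(v)}$), the tree cost rewrites as $C(T,\pi) = \sum_v \pi(S_v)c_{i(v)}$, and a direct expansion gives $\Phi(S) - \sum_j \Phi(S^j) = \pi(S)^2 - \sum_j \pi(S^j)^2 = \pi(S)\,\Delta_i(S,\pi)$. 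Hence the $\pi(S)$-weighted shrinkage of a question is exactly the collision it removes, and telescoping over any tree rooted at $H$ (leaves are singletons, where $\Phi=0$) yields $\sum_v \pi(S_v)\Delta_{i(v)}(S_v,\pi) = \Phi(H)$. Two features of this potential let me handle the general setting for free: the quadratic term $\sum_j \pi(S^j)^2$ already accommodates $|A|>2$, and $\Phi$ is defined for an arbitrary non-uniform $\pi$; only the costs remain to be threaded through the ratio.

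The crux is a \emph{good-question lemma}: at every reachable version space $S$ there is a question whose shrinkage-cost ratio is large relative to the optimal cost of the restricted subproblem, namely $\max_i \Delta_i(S,\pi_S)/c_i \ge \alpha\,\phi(S)/C(T^\ast,\pi_S)$ for a universal constant $\alpha$, where $\phi(S) = \Phi(S)/\pi(S)^2$ is the normalized collision and $T^\ast$ is the optimal tree restricted to $S$. To prove it I would look inside $T^\ast$: by the telescoping identity its questions remove exactly $\phi(S)$ collision, while its expected cost is $\sum_v \pi_S(S_v)c_{i(v)}$. The key monotonicity fact is that the pair-mass a \emph{fixed} question separates can only grow as its version space grows, so a question effective deep in $T^\ast$ is at least as effective, in separated pair-mass, when re-evaluated at $S$ itself. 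The hard part will be converting this into a ratio bound without paying for the low-mass, deep part of $T^\ast$: a naive averaging replaces $C(T^\ast,\pi_S)$ by the \emph{unweighted} sum of question costs, which can be far larger than the expected cost. I expect to fix this by truncating $T^\ast$ at the first nodes whose mass falls below a constant fraction, arguing that a cheap optimal tree must already remove a constant fraction of the collision among those high-mass nodes, and charging that removal to a single well-chosen question evaluated at $S$. This is exactly the step where the constant degrades from the binary unit-cost value.

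Given the lemma I would close by amortization. Using the decomposition in Lemma~\ref{decomplem} to take expectations over the branches of the greedy tree, the greedy choice removes collision at rate at least $\alpha\,\phi(S)/C(T^\ast,\pi_S)$ per unit cost, so the expected cost to drive the normalized collision down by a constant factor is $O(C^\ast)$. Since the collision starts below $1$ and any non-singleton version space has collision at least of order $(\min_h \pi(h))^2$, only $O(\ln 1/(\min_h \pi(h)))$ constant-factor reductions are needed before every leaf is reached; integrating the resulting $d\Phi/\Phi$-type bound and bookkeeping the constants (the factor-of-two losses from the truncation and from the collision floor $(\min_h\pi(h))^2$ combining with $\alpha$) produces $C(T^g,\pi) \le 12\,C^\ast \ln 1/(\min_h \pi(h))$. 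I would finally verify that replacing the per-subproblem optima $C(T^\ast,\pi_S)$ by the global $C^\ast$ is legitimate, which again follows from the restriction/decomposition structure of tree cost.
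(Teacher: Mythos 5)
Your overall strategy---tracking the pair mass $\Phi(S)=\sum_{s\neq t}\pi(s)\pi(t)$ and using the identity $\Phi(S)-\sum_j\Phi(S^j)=\pi(S)\Delta_i(S,\pi)$---is genuinely different from the paper, which runs an induction on $|S|$ with the potential $\ln(\pi(S)/\min_{h\in S}\pi(h))$ and splits into two cases according to whether $\mathsf{CP}(\pi_S)\le 1/2$, devoting Lemmas \ref{existcutrlma} and \ref{greedycutrlma} and all of Case two of Theorem \ref{submainthm} to the situation where one hypothesis carries more than half the mass. Your identities are correct, and your ``good-question lemma'' with $\alpha=1$ is exactly the paper's Corollary \ref{shrinkcor} (note $\phi(S)=1-\mathsf{CP}(\pi_S)$). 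But as written, both load-bearing steps of your plan are unexecuted. For the good-question lemma you identify ``the hard part'' as not paying for the low-mass deep nodes of $T^*$ and propose a truncation; no truncation is needed, and the one you describe is not obviously sound. Telescoping your identity over $T^*$ restricted to $S$ gives $\phi(S)\le\sum_u\pi_S(S_u)\,\Delta_{q(u)}(S_u,\pi_S)$, monotonicity (the paper's Lemma \ref{mondeclma}) upgrades each $\Delta_{q(u)}(S_u,\pi_S)$ to $\Delta_{q(u)}(S,\pi_S)$, and since $C(T^*,\pi_S)=\sum_u\pi_S(S_u)c_{q(u)}$ carries exactly the same node weights, pulling out $\max_i\Delta_i(S,\pi_S)/c_i$ finishes the lemma: deep nodes are harmless because they have small mass on \emph{both} sides of the comparison. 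The paper instead proves this via the contrapositive induction of Lemma \ref{costlma}; either route works, but yours currently stops at an acknowledged gap with a misdirected fix.

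The amortization is the more serious problem. The guarantee at a node $v$ of the greedy tree is local: the cost charged there is $\pi(S_v)c_{i(v)}$, while the collision removed is $\pi(S_v)\Delta_{i(v)}(S_v,\pi)\ge c_{i(v)}\Phi(S_v)/C^*(S_v)$, so the cost-to-collision-removed ratio is bounded by $C^*/\Phi(S_v)$ with the \emph{local} collision in the denominator. Because the tree branches and the remaining collision is split across siblings, $\Phi(S_v)$ can be far smaller than the global remaining collision, so you cannot integrate $d\Phi/\Phi$ against a single global potential---the inequality points the wrong way, and summing the per-node log-drops over a branching tree does not telescope to one logarithm. The argument must instead be run as a weighted induction over subtrees, proving something like $C(T^g_S,\pi_S)\le\beta\,C^*(S)\ln(\Phi(S)/m(S))$ with a floor $m(S)$ of order $\min_{s\in S}\pi(s)^2$, via the decomposition $C(T^g_S,\pi_S)=c_i+\sum_j\pi_S(S^j)C(T^g_{S^j},\pi_{S^j})$ and separate treatment of singleton children (where $\Phi=0$ and the inductive bound degenerates). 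Executed that way your approach does close---and, notably, it sidesteps the paper's heavy-hypothesis case entirely---but that derivation is absent from the proposal, and the constant $12$ is asserted rather than obtained; the paper's $12$ comes precisely from the bookkeeping in the case analysis your sketch never engages with.
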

What is perhaps surprising about this bound is that the quality of
approximation does not depend on the costs themselves. 
The proof follows part of the strategy used by \citet{greedy}.  The general
approach is to show that if the average cost of some question
tree is low, then there must be at least one question with
high shrinkage-cost ratio.  We then use this to
form the basis of an inductive argument.  However, 
this simple argument fails when only a few objects have high probability mass.

We start by showing the shrinkage of
$q_i$ monotonically decreases as we eliminate elements from $S$.
\begin{lemma}Extension of Lemma 6 \citep{greedy} to non binary queries.  
If $T \subseteq S \subseteq H$,
and $T \neq \emptyset$ then, $\forall i, \pi$,  $\Delta_i(T,
\pi) \leq \Delta_i(S, \pi)$. \label{mondeclma} \end{lemma}
\begin{proof}
For $|S| = 1$ the result is immediate since $|T| \geq 1$ and therefore $S = T$.
We show that if $|S| > 2$, removing any single element $a \in S \setminus T$ 
from $S$ does not increase $\Delta_i(S, \pi)$.  
The lemma then follows since we can
remove all of $S \setminus T$ from $S$ an element at a time. 
Assume w.l.o.g.\ $a \in S^k$ for some $k$. Here let $A' 
\triangleq A \setminus \lbrace k \rbrace$
\begin{align*} 
\Delta_i(S-\lbrace a \rbrace, \pi)  = 
\frac{(\pi(S^k) - \pi(a))(\pi(S) - \pi(S^k))}{\pi(S) - \pi(a)}  
+ \sum_{j \in A'} \frac{\pi(S^j)(\pi(S)-\pi(S^j)-\pi(a))}{\pi(S)-\pi(a)}
\end{align*}
We show that this is term by term less than or equal to
\begin{align*}
\Delta_i(S, \pi) & =  \frac{\pi(S^k)(\pi(S) - \pi(S^k))}{\pi(S)} 
+  \sum_{j \in A'} \frac{\pi(S^j)(\pi(S) - \pi(S^j))}{\pi(S)}
\end{align*}
For the first term 
\[ \frac{(\pi(S^k) - \pi(a))(\pi(S) - \pi(S^k))}{\pi(S) - \pi(a)} 
\leq \frac{\pi(S^k)(\pi(S) - \pi(S^k))}{\pi(S)}\]
because $\pi(S) \geq \pi(S^k)$ and $\pi(a) \geq 0$.
For any other term in the summation,
\[ \frac{\pi(S^j)(\pi(S) - \pi(S^j) - \pi(a)))}{\pi(S) - \pi(a)} 
\leq \frac{\pi(S^j)(\pi(S) - \pi(S^j))}{\pi(S)}\]
because $\pi(S) - \pi(S^j) \geq \pi(a) \geq 0$ and $\pi(S) > \pi(a)$.
\end{proof}
Obviously, the same result holds when we consider shrinkage-cost
ratios.
\begin{corollary} If $T \subseteq S \subseteq H$,
and $T \neq \emptyset$ then for any $i, \pi$,  $\Delta_i(T,
\pi) / c_i \leq \Delta_i(S, \pi) / c_i$. \label{mondeccor} \end{corollary}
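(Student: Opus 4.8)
The plan is almost entirely to lean on Lemma~\ref{mondeclma}, which already does all of the real work. Under the stated hypotheses $T \subseteq S \subseteq H$ with $T \neq \emptyset$, that lemma gives $\Delta_i(T, \pi) \leq \Delta_i(S, \pi)$ for every $i$ and every $\pi$. The only additional fact I need is that each question cost $c_i$ is strictly positive, which is a standing assumption from the Preliminaries (every $q_i$ has a positive cost $c_i$). Dividing both sides of the lemma's inequality by the positive constant $c_i$ preserves the direction of the inequality and yields exactly $\Delta_i(T, \pi)/c_i \leq \Delta_i(S, \pi)/c_i$, which is the claim.

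Before dividing, I would briefly confirm that both shrinkage-cost ratios are well-defined, since $\Delta_i(\cdot, \pi)$ is only defined when its first argument carries positive mass. Because $T \neq \emptyset$ and $\min_h \pi(h) > 0$, we have $\pi(T) \geq \min_h \pi(h) > 0$, and since $T \subseteq S$ also $\pi(S) \geq \pi(T) > 0$; hence $\Delta_i(T, \pi)$, $\Delta_i(S, \pi)$, and their ratios with $c_i$ all make sense. There is no genuine obstacle here: the corollary is simply Lemma~\ref{mondeclma} rescaled by a positive number, which is why the surrounding text can call it \emph{obvious}. The one point worth stating explicitly is the positivity of $c_i$, since scaling an inequality by a quantity of unknown sign could reverse it; positivity of the costs is exactly what rules that out.
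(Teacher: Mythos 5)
Your proposal is correct and matches the paper's reasoning exactly: the paper treats this corollary as an immediate consequence of Lemma~\ref{mondeclma}, obtained by dividing the inequality by the (standing-assumption) positive cost $c_i$. Your extra remark on well-definedness via $\pi(T)>0$ is a harmless and reasonable bit of added care.
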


We define the \emph{
collision probability} of a distribution $v$ over $Z$ to be
$\mathsf{CP}(v) \triangleq \sum_{z \in Z} v(z)^2 $ 
This is exactly the probability two samples  from $v$ will be the same
and quantifies the extent to which mass is concentrated on only 
a few points (similar to inverse entropy).
If no question has a large shrinkage-cost ratio and
the collision probability is low, then
the expected cost of any query tree must be high.
\begin{lemma} Extension of Lemma 7 \citep{greedy} to non binary queries
and non uniform costs.
For any set $S$ and distribution $v$ over $S$,  
if $\forall i$ $\Delta_i(S, v)/c_i < \Delta / c$, 
then for any $R \subseteq S$ with $R \neq \emptyset$
and any query tree $T$ whose leaves include $R$
\[ C(T, v_R) \geq \frac{c}{\Delta} v(R) (1 - \mathsf{CP}(v_R)) \]
\label{costlma}
\end{lemma}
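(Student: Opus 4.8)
The plan is to recast both sides of the inequality in terms of an unnormalized \emph{pair mass}. For a subset $U \subseteq S$ define $P(U) \triangleq \sum_{s,s' \in U,\, s \neq s'} v(s) v(s') = v(U)^2 - \sum_{s \in U} v(s)^2 = v(U)^2(1 - \mathsf{CP}(v_U))$. With this notation the right-hand side of the lemma is exactly $\frac{c}{\Delta} P(R)/v(R)$, so it suffices to prove $C(T, v_R) \geq \frac{c}{\Delta} P(R)/v(R)$. The crux is the observation that shrinkage measures separated pair mass: expanding the second form of $\Delta_i$ gives $\Delta_i(U, v)\, v(U) = v(U)^2 - \sum_j v(U^j)^2 = \sum_{j \neq k} v(U^j) v(U^k)$, i.e.\ $\Delta_i(U,v)\,v(U)$ is the mass of pairs inside $U$ that $q_i$ sends to different answers. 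This identity is what makes the bookkeeping telescope.

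First I would decompose the tree at its root. If the root asks $q_{i_0}$, splitting $R$ into $R^1, R^2, \ldots$ with corresponding subtrees $T_j$, then Lemma \ref{decomplem}, together with the fact that the root question is paid by every element of $R$ (so $C(T, v_{R^j}) = c_{i_0} + C(T_j, v_{R^j})$), yields $C(T, v_R) = c_{i_0} + \sum_j v_R(R^j) C(T_j, v_{R^j})$. I would then induct on $|R|$ (with $S$, $v$, $\Delta$, $c$ fixed, noting $R^j \subseteq R \subseteq S$). The base case $|R| = 1$ is immediate since $P(R) = 0$ and $C(T, v_R) = 0$. For the inductive step, apply the hypothesis to each subtree, $C(T_j, v_{R^j}) \geq \frac{c}{\Delta} P(R^j)/v(R^j)$, and use $v_R(R^j)\, P(R^j)/v(R^j) = P(R^j)/v(R)$ to get $\sum_j v_R(R^j) C(T_j, v_{R^j}) \geq \frac{c}{\Delta}\frac{1}{v(R)} \sum_j P(R^j)$.

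The remaining step relates $\sum_j P(R^j)$ to $P(R)$ via the pair-mass identity: every distinct pair in $R$ is either separated by $q_{i_0}$ at the root or survives into exactly one $R^j$, so $P(R) = \sum_j P(R^j) + \Delta_{i_0}(R, v)\, v(R)$. Substituting gives $C(T, v_R) \geq c_{i_0} + \frac{c}{\Delta}\frac{P(R)}{v(R)} - \frac{c}{\Delta}\Delta_{i_0}(R, v)$, so the desired bound follows as soon as $c_{i_0} \geq \frac{c}{\Delta}\Delta_{i_0}(R, v)$, i.e.\ $\Delta_{i_0}(R, v)/c_{i_0} \leq \Delta/c$.

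This last inequality is where the hypothesis enters, and it is the one step that needs care: the assumption $\Delta_i(S,v)/c_i < \Delta/c$ is stated for the full set $S$, whereas here I need it for the root's version space $R \subseteq S$. This is precisely what Lemma \ref{mondeclma} (and Corollary \ref{mondeccor} for the ratio) supplies, since $\Delta_{i_0}(R,v) \leq \Delta_{i_0}(S,v) < c_{i_0}\Delta/c$. I expect the main obstacle to be conceptual rather than computational, namely recognizing that $\Delta_i(U,v)\,v(U)$ is separated pair mass and that $v(U)^2(1-\mathsf{CP}(v_U))$ is total pair mass; once that is in hand, the decomposition, induction, and monotonicity step are routine, and the strictness in the hypothesis even yields the inequality with room to spare.
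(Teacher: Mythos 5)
Your proof is correct and is essentially the paper's own argument: the same induction on $|R|$, the same decomposition at the root question, and the same appeal to Corollary \ref{mondeccor} to discharge the final step $c_{i_0} \geq \frac{c}{\Delta}\Delta_{i_0}(R,v)$. Your unnormalized pair mass $P(U) = v(U)^2(1-\mathsf{CP}(v_U))$ is just a cleaner bookkeeping device for the paper's identity $\sum_{j} v_R(R^j)^2\,\mathsf{CP}(v_{R^j}) = \mathsf{CP}(v_R)$ combined with $v(R)(1-\sum_j v_R(R^j)^2) = \Delta_i(R,v)$, so the two arguments coincide term for term.
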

\begin{proof}
We prove the lemma with induction on $|R|$.  For $|R|=1$, 
$\mathsf{CP}(v_R) = 1$ and the right hand side of the inequality is zero.  
For $R>1$, we lower bound the cost of any query tree
on $R$.  At its root, any query tree chooses some $q_i$ with cost
$c_i$ that divides the version space into $R^j$ for $j \in A$.
Using the inductive hypothesis we can then write the
cost of a tree as
\begin{eqnarray*}
C(T,v_R) & \geq & 
c_i + \sum_{j \in A} v_R(R^j) \frac{c}{\Delta} 
(v(R^j)(1 - \mathsf{CP}(v_{R^j}))) \\
& = & c_i + \frac{c}{\Delta} v(R) 
\sum_{j \in A} (v_R(R^j)^2 - v_R(R^j)^2 \mathsf{CP}(v_{R^j}) ) \\
& = & c_i + \frac{c}{\Delta} v(R) 
(1 - 1 + \sum_{j \in A} v_R(R^j)^2 - \mathsf{CP}(v_R)) \\
\end{eqnarray*}
Here we used 
\begin{align*}
\sum_{j \in A} v_R(R^j)^2 \mathsf{CP}(v_{R^j}) 
= \sum_{j \in A} v_R(R^j)^2 \sum_{r \in R^j} v_{R^j}(r)^2 
= \sum_{r \in R} v_R(r)^2 = \mathsf{CP}(v_R) \\
\end{align*}
We now note $v(R)(1 - \sum_{j \in A} v_R(R^j)^2) = 
v(R) - \sum_{j \in A} v(R^j)^2/v(R) = \Delta_i(R, v)$
\begin{eqnarray*}
C(T,v_R) & \geq & c_i + \frac{c}{\Delta} v(R) (1 - \mathsf{CP}(v_R)) 
- \Delta_i(R, v) \frac{c}{\Delta} \\
& = & \frac{c}{\Delta}v(R) (1 - \mathsf{CP}(v_S)) 
+ \frac{\Delta c_i - \Delta_i(R, v) c}{\Delta} \\
\end{eqnarray*}
Using Corollary \ref{mondeccor}, $\Delta_i(R, v) / c_i \leq 
\Delta_i(S, v) / c_i \leq \Delta / c$, 
so $\Delta c_i - \Delta_i(R, v) c \geq 0$ and therefore
\[ C(R,v_S) \geq \frac{c}{\Delta} v(R) (1 - \mathsf{CP}(v_R)) \]
which completes the induction.
\end{proof}
This lower bound on the cost of a tree 
translates into a lower bound on the shrinkage-cost ratio 
of the question chosen by the greedy tree. 
\begin{corollary}Extension of Corollary 8 \citep{greedy} to non binary
queries and non uniform costs.  For any 
$S \subseteq H$ with $S \neq \emptyset$ 
and query tree $T$ whose leaves contain $S$, there
must be a question $q_i$ with  $\Delta_i(S, \pi_S) / c_i
\geq (1 - \mathsf{CP}(\pi_S)) / C(T, \pi_S)$ \label{shrinkcor} \end{corollary}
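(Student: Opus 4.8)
The plan is to derive this lower bound on the best shrinkage-cost ratio directly from Lemma~\ref{costlma}, instantiated with the particular choice $v = \pi_S$ and $R = S$. The strategy is contrapositive in spirit: Lemma~\ref{costlma} asserts that if \emph{every} question has a small shrinkage-cost ratio then \emph{every} query tree is expensive, and I would simply turn this around to conclude that an inexpensive tree forces \emph{some} question to have a large shrinkage-cost ratio.

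First I would record the simplifications that occur when $R = S$ and $v = \pi_S$. Because $\pi_S$ is already normalized to $S$, we have $v(S) = \pi_S(S) = 1$, and the restricted-renormalized distribution $v_S$ is again just $\pi_S$, so $\mathsf{CP}(v_S) = \mathsf{CP}(\pi_S)$. With these substitutions the conclusion of Lemma~\ref{costlma} reads
\[ C(T, \pi_S) \geq \frac{c}{\Delta}\bigl(1 - \mathsf{CP}(\pi_S)\bigr), \]
which rearranges to $\Delta/c \geq \bigl(1 - \mathsf{CP}(\pi_S)\bigr)/C(T,\pi_S)$, exactly the form of the ratio being chased.

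Next, let $\rho^* \triangleq \max_i \Delta_i(S,\pi_S)/c_i$ denote the largest shrinkage-cost ratio, which is attained by some question since the set of questions is finite. For any $\epsilon > 0$ I would set $\Delta/c = \rho^* + \epsilon$; then every question satisfies the strict hypothesis $\Delta_i(S,\pi_S)/c_i \leq \rho^* < \Delta/c$ required by Lemma~\ref{costlma}, and the rearranged conclusion above yields $\rho^* + \epsilon \geq \bigl(1 - \mathsf{CP}(\pi_S)\bigr)/C(T,\pi_S)$. Letting $\epsilon \to 0$ gives $\rho^* \geq \bigl(1 - \mathsf{CP}(\pi_S)\bigr)/C(T,\pi_S)$, so the maximizing question is the claimed $q_i$.

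The one point requiring care is precisely this limiting step, and I expect it to be the main (if minor) obstacle. Lemma~\ref{costlma} is stated with a \emph{strict} inequality on the shrinkage-cost ratios, so I cannot simply substitute $\Delta/c = \rho^*$, since the maximizing question would violate the strict hypothesis; approaching the threshold from above via the $\epsilon$ argument (equivalently, taking an infimum over all $\Delta/c > \rho^*$) is what transfers the bound to the boundary value $\rho^*$. I would also dispatch the degenerate case $|S| = 1$ separately, where $\mathsf{CP}(\pi_S) = 1$ makes the numerator vanish and the bound holds trivially, noting that for $|S| > 1$ at least one element of $S$ sits below a positive-cost question so $C(T,\pi_S) > 0$ and the division is legitimate.
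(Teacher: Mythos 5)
Your proposal is correct and takes essentially the same route as the paper: both instantiate Lemma~\ref{costlma} with $v=\pi_S$, $R=S$, and handle the strict-inequality hypothesis by choosing a threshold strictly above the maximum ratio (the paper phrases this as a contradiction with a $\Delta/c$ wedged below the target, you phrase it as an $\epsilon\to 0$ limit, which is the same idea). Your extra care about the $|S|=1$ case and the positivity of $C(T,\pi_S)$ is sound and slightly more explicit than the paper's proof.
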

\begin{proof} 
Suppose this is not the case.  Then there is some
$\Delta/c < (1 - \mathsf{CP}(\pi_S)) / C(T, \pi_S)$ 
such that $\forall i$ $\Delta_i(S, \pi_S)/c_i \leq \Delta/c$. 
By Lemma \ref{costlma} (with $v \triangleq \pi_S$, $R \triangleq S$), 
\begin{align*}
C(T,\pi_S) & \geq \pi_S(S) \frac{c}{\Delta}(1 - \mathsf{CP}(\pi_S)) 
 > \pi_S(S) C(T,\pi_S) = C(T,\pi_S) \\
\end{align*}
which is a contradiction. 
\end{proof}

A special case which poses some difficulty for the main proof
is when for some $S \subseteq H$ we have $\mathsf{CP}(\pi_S) > 1/2$.
First note that if $\mathsf{CP}(\pi_S) > 1/2$ one object $h_0$ 
has more than half the mass of $S$.  In the lemma below,
we use $R \triangleq S \setminus \lbrace h_0 \rbrace$.  Also 
let $\delta_i$ be the relative mass of the hypotheses in $R$ that are
distinct from $h_0$ w.r.t.\ question $q_i$.
$ \delta_i \triangleq \pi_R(\lbrace r \in R : q_i(h_0) \neq q_i(r) \rbrace) $
In other words, when question $q_i$ is asked, $R$ is divided into
a set of hypotheses that agree with $h_0$ (these have relative
mass $1-\delta_i$) and a set of hypotheses that disagree with $h_0$ (these
have relative mass $\delta_i$).
\begin{figure}[t]
\centering
\includegraphics[width=.45\textwidth]{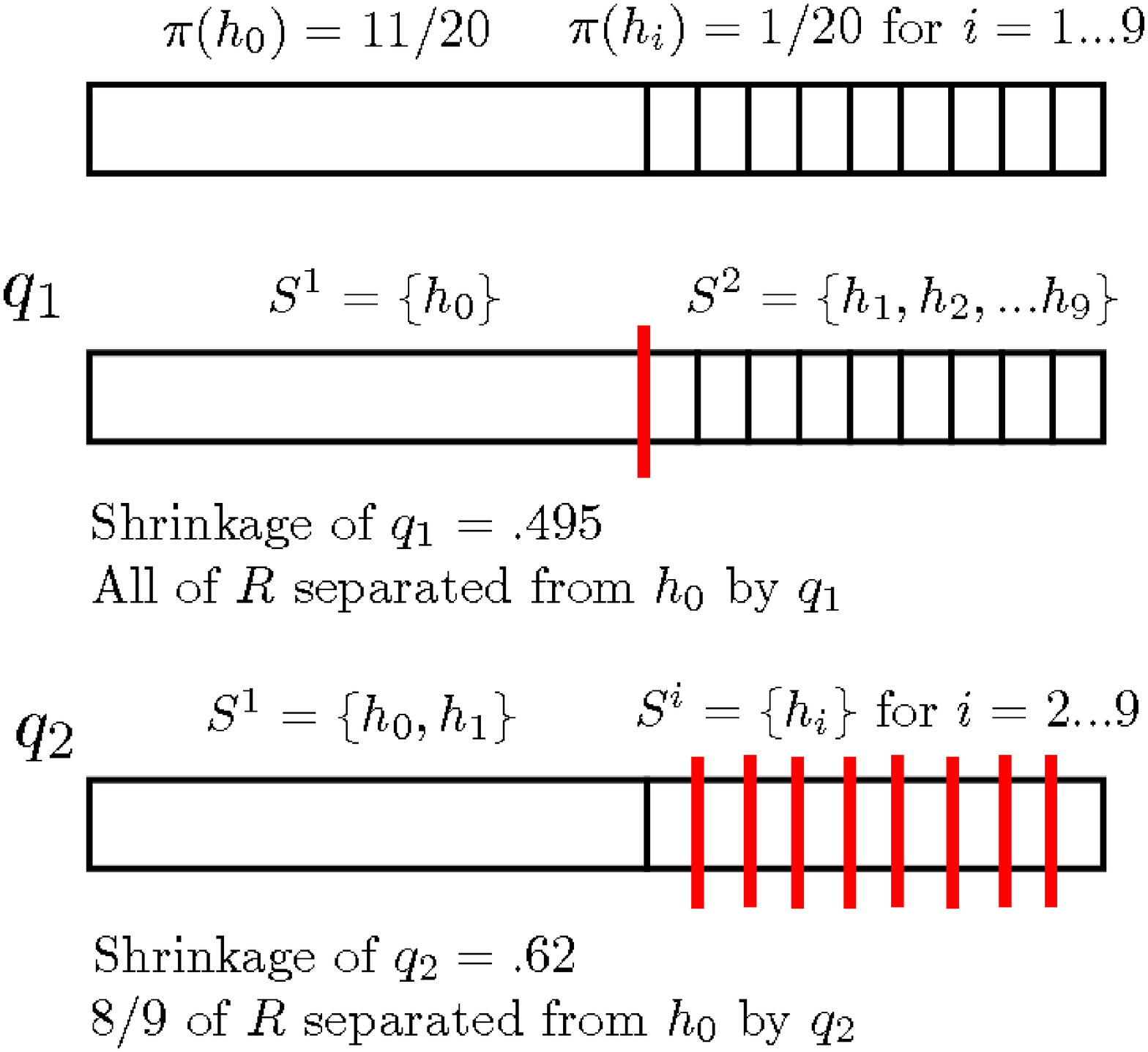}
\includegraphics[width=.45\textwidth]{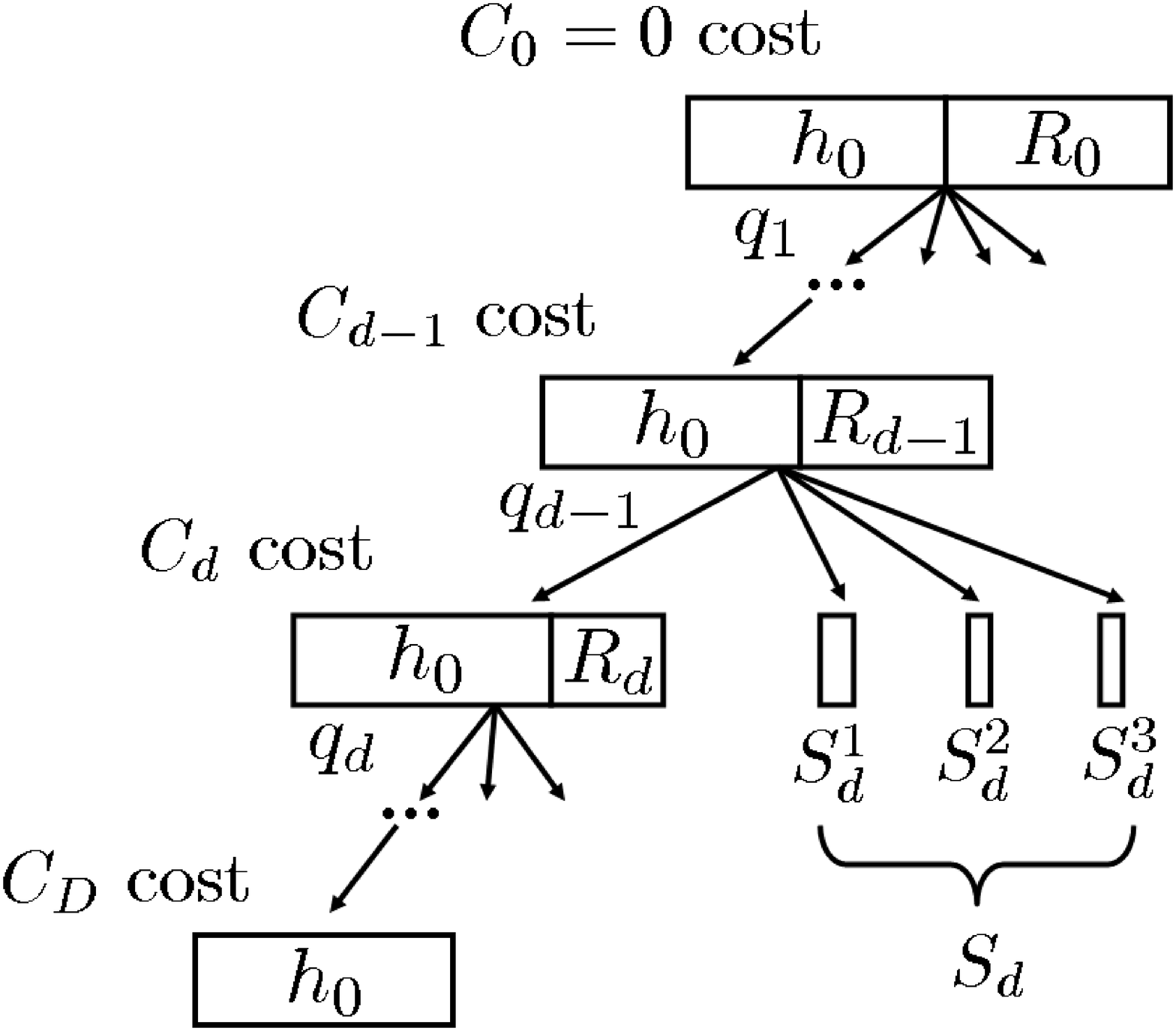}
\caption{Left: Counter example showing that when a single hypothesis $h_0$
contains more than half the mass, the query with maximum shrinkage
is not necessarily the query that separates the most mass from $h_0$.
Right: Notation for this case.}
\label{counterfig}
\end{figure}
\citet{greedy} also treats this as a special case.  However, in
the more general setting treated here the situation is more subtle.  
For yes or no questions, the question
chosen by the greedy query tree is also the question that removes the
most mass from $R$.  In our setting this is not necessarily the case.
The left of Figure \ref{counterfig} shows a counter example.  
However, we can show the fraction of mass removed from $R$ by the
greedy query tree is at least half the fraction removed by any other
question.  Furthermore, to handle costs, we must instead consider the
fraction of mass removed from $R$ \emph{per unit cost}.

In this lemma we use $\pi_{\lbrace h_0 \rbrace}$ to denote the distribution which
puts all mass on $h_0$.  The cost of identifying $h_0$ in a 
tree $T^*$ is then $C^*(h_0) \triangleq C(T^*, \pi_{\lbrace h_0 \rbrace})$.
\begin{lemma}Consider any $S \subseteq H$ and $\pi$ 
with $\mathsf{CP}(\pi_S) > 1/2$ and $\pi(h_0) > 1/2$. 
Let $C^*(h_0) = C(T^*, \pi_{\lbrace h_0 \rbrace})$
for any $T^*$ whose leaves contain $S$. 
Some question $q_i$ has $\delta_i / c_i > 1/C^*(h_0)$.
\label{existcutrlma} \end{lemma}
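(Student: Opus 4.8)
The plan is to work entirely with the root-to-$h_0$ path in the given tree $T^*$. Let $q_{i_1}, q_{i_2}, \ldots, q_{i_L}$ be the questions along this path, so that by definition $C^*(h_0) = \sum_{l=1}^{L} c_{i_l}$. The key structural observation is that every $r \in R = S \setminus \lbrace h_0 \rbrace$ must be separated from $h_0$ somewhere on this path: since $r$ and $h_0$ sit at distinct leaves of $T^*$, their root-to-leaf paths share a prefix and then branch, and the branching question $q_{i_l}$ lies on the $h_0$-path and satisfies $q_{i_l}(r) \neq q_{i_l}(h_0)$. First I would assign each $r$ to the \emph{first} such path question, partitioning $R$ into blocks $R_1, \ldots, R_L$ with $R_l \subseteq \lbrace r \in R : q_{i_l}(r) \neq q_{i_l}(h_0) \rbrace$, so that $\pi_R(R_l) \leq \delta_{i_l}$.

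Summing over the path and using that the blocks partition $R$ gives $\sum_{l} \delta_{i_l} \geq \sum_{l} \pi_R(R_l) = \pi_R(R) = 1$. Now I would apply the mediant (weighted-average) inequality to the path questions: since $\sum_l \delta_{i_l} \geq 1$ and $\sum_l c_{i_l} = C^*(h_0)$,
\[ \max_{l} \frac{\delta_{i_l}}{c_{i_l}} \;\geq\; \frac{\sum_l \delta_{i_l}}{\sum_l c_{i_l}} \;\geq\; \frac{1}{C^*(h_0)} . \]
Thus at least one question on the path already achieves mass-removal-per-unit-cost (measured by $\delta$) at least $1/C^*(h_0)$, which is essentially the claim.

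The step I expect to be the main obstacle is promoting this $\geq$ to the strict inequality $>$ in the statement. Equality throughout the chain above forces a rigid boundary configuration: every path question would have to satisfy $\delta_{i_l}/c_{i_l} = 1/C^*(h_0)$ exactly, and $\sum_l \delta_{i_l} = 1$, meaning no element that has already diverged from $h_0$ differs from it again at any later path question (no double counting). To rule this out I would argue by contradiction: assume every question has $\delta_i/c_i \leq 1/C^*(h_0)$, deduce this tight configuration, and then try either to exhibit a strictly cheaper way of isolating $h_0$ (contradicting the role of $C^*(h_0)$) or to derive a contradiction from the hypotheses $\mathsf{CP}(\pi_S) > 1/2$ and $\pi(h_0) > 1/2$. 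This balancing/boundary case is exactly the subtlety the authors flag as making the general (non-binary, non-uniform-cost) setting harder than the yes/no case, so I expect the careful handling of it---rather than the averaging argument---to be where the real work lies.
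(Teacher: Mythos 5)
Your argument is essentially the paper's own proof: it too takes the set $I$ of questions on the root-to-$h_0$ path in $T^*$ (total cost at most $C^*(h_0)$), notes that since these questions identify $h_0$ every $r \in R$ disagrees with $h_0$ on at least one of them so $\sum_{i \in I} \delta_i \geq 1$, and then applies exactly your weighted-average/mediant step to extract a single $q_i$ with $\delta_i/c_i \geq 1/C^*(h_0)$. The strict inequality you flag as the main obstacle is not actually handled in the paper either---its proof also only delivers $\geq$, and the non-strict version is all that the downstream uses (Lemma \ref{greedycutrlma} and the $\pi(R_d)$ bound in Theorem \ref{submainthm}) require---so you need not sink effort into that boundary case.
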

\begin{proof}
There is always a set of questions indexed by the set $I$ with total cost 
$\sum_{i \in I} c_i \leq C^*(h_0)$ that distinguish $h_0$ from $R$ within S.
In particular, the set of questions used to identify $h_0$ in $T^*$ satisfy
this. Since the set identifies $h_0$, $\sum_{i \in I} \delta_i \geq 1$ 
which implies
\[ \sum_{i \in I} \frac{c_i}{C^*(h_0)} \frac{\delta_i}{c_i} \geq 1/C^*(h_0) \]
Because $c_i/C^*(h_0) \in (0, 1]$ and 
$\sum_{i \in I} c_i/C^*(h_0) \leq 1$, there must be a $q_i$ such that 
$\delta_i/c_i \geq 1 / C^*(h_0)$. 
\end{proof}

Having shown that some query always reduces the relative mass of $R$ 
by $1/C^*(h_0)$ per unit cost, we now show that the greedy query tree reduces
the mass of $R$ by at least half as much per unit cost.
\begin{lemma} Consider any $\pi$ and $S \subseteq H$ with $\mathsf{CP}(\pi_S)
> 1/2$, $\pi(h_0) > 1/2$, 
and a corresponding subtree $T^g_S$ in the greedy tree.  Let
$C^*(h_0) = C(T^*, \pi_{\lbrace h_0 \rbrace})$ for any $T^*$ whose
leaves contain $S$.  The question $q_i$ chosen by $T^g_S$ 
has $\delta_i / c_i > 1/(2C^*(h_0))$.
\label{greedycutrlma} \end{lemma}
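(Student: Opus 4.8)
The plan is to relate the shrinkage-cost ratio that the greedy rule actually optimizes to the quantity $\delta_i/c_i$ we wish to control, and then to chain this relation through Lemma \ref{existcutrlma}. Write $p \triangleq \pi_S(h_0) > 1/2$, and for a generic question $q_i$ let $j_0 = q_i(h_0)$ be the branch containing $h_0$ and $a \triangleq \pi_S(S^{j_0})$ its relative mass. The elements of $R$ separated from $h_0$ by $q_i$ are exactly those landing in branches $j \neq j_0$, so their $\pi_S$-mass is $1 - a$ and hence $\delta_i = (1-a)/(1-p)$, i.e.\ $(1-p)\delta_i = 1 - a$. Since $\pi_S(S) = 1$, we also have $\Delta_i(S,\pi_S) = 1 - \sum_j \pi_S(S^j)^2 = 1 - a^2 - \sum_{j \neq j_0} \pi_S(S^j)^2$.

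The key step I would establish is a two-sided bound sandwiching the shrinkage between $(1-p)\delta_i$ and $2(1-p)\delta_i$. For the upper bound, discard the nonnegative terms $\sum_{j \neq j_0}\pi_S(S^j)^2 \geq 0$ to get $\Delta_i(S,\pi_S) \leq 1 - a^2 = (1-a)(1+a) \leq 2(1-a)$, using $a \leq 1$. For the lower bound, use $\sum_{j \neq j_0}\pi_S(S^j)^2 \leq \big(\sum_{j \neq j_0}\pi_S(S^j)\big)^2 = (1-a)^2$, valid for nonnegative summands, to obtain $\Delta_i(S,\pi_S) \geq 1 - a^2 - (1-a)^2 = 2a(1-a) > (1-a)$, where the strict inequality uses $a \geq p > 1/2$ (and $a < 1$, which holds whenever $q_i$ separates positive mass from $h_0$). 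Rewriting $1-a = (1-p)\delta_i$ gives, for every separating question,
\[ (1-p)\,\delta_i < \Delta_i(S,\pi_S) \leq 2(1-p)\,\delta_i . \]

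With this in hand the argument closes quickly. By Lemma \ref{existcutrlma} there is a question $q_{i^*}$ with $\delta_{i^*}/c_{i^*} > 1/C^*(h_0)$; since $\delta_{i^*} > 0$ it separates positive mass, so the left-hand inequality applies and yields $\Delta_{i^*}(S,\pi_S)/c_{i^*} > (1-p)\delta_{i^*}/c_{i^*} > (1-p)/C^*(h_0)$. The greedy rule selects the question $q_i$ of maximum shrinkage-cost ratio, so $\Delta_i(S,\pi_S)/c_i \geq \Delta_{i^*}(S,\pi_S)/c_{i^*} > (1-p)/C^*(h_0)$. Applying the right-hand inequality to the greedy question then gives $2(1-p)\delta_i/c_i \geq \Delta_i(S,\pi_S)/c_i > (1-p)/C^*(h_0)$, and dividing by $2(1-p) > 0$ gives $\delta_i/c_i > 1/(2C^*(h_0))$, as claimed.

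The main obstacle is the two-sided bound of the second paragraph, and it is exactly where the advertised factor of $2$ enters. Because the greedy question need not be the question that separates the most mass from $h_0$ (the counterexample in Figure \ref{counterfig}), we cannot argue about $\delta_i$ directly; we must pass through $\Delta_i$, whose upper and lower control by $(1-p)\delta_i$ differ by the factor of $2$ that propagates into the final bound. The hypothesis $p > 1/2$ is essential here: it is precisely what makes the lower bound $2a(1-a)$ exceed $(1-a)$, so that losing a factor of $2$ (rather than an unbounded factor) suffices.
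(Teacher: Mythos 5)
Your proof is correct and follows essentially the same route as the paper: both establish the two-sided sandwich $\delta_i\pi_S(R) \leq \Delta_i(S,\pi_S) \leq 2\delta_i\pi_S(R)$ (your $(1-p)\delta_i$ and $2(1-p)\delta_i$ are exactly these quantities) and then chain it with Lemma \ref{existcutrlma} through the greedy rule's maximality of the shrinkage-cost ratio. The only difference is cosmetic: you derive the sandwich from the form $\Delta_i = 1 - \sum_j \pi_S(S^j)^2$ using the sum-of-squares inequality, while the paper bounds the product form $\sum_j \pi_S(S^j)(1-\pi_S(S^j))$ term by term.
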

\begin{proof}
We prove this by showing that the fraction removed from $R$ per unit
cost by the greedy query tree's question is at least half that of any
other question.  Combining this with Lemma \ref{existcutrlma}, we get the
desired result.  

We can write the shrinkage of $q_i$ in terms
of $\delta_i$.  Here let $A' \triangleq A \setminus \lbrace q_i(h_0) \rbrace$.
Since $\pi(S^{q_i(h_0)}) = \pi(h_0) + \left( \pi(S) - \delta_i \pi(R) \right)$, 
and $\pi(S) - \pi(S^{q_i(h_0)}) = \delta_i \pi(R)$, we have that 
\begin{align*}
\Delta_i(S, \pi_S) = 
(\pi_S(h_0) + (1-\delta_i)\pi_S(R)) \delta_i \pi_S(R) 
 +  \sum_{j \in A'} \pi_S(S^j)(\pi_S(S)-\pi_S(S^j))
\end{align*}
We use $\sum_{j \in A'} \pi_S(S^j) = \delta_i \pi_S(R)$.

We can then upper bound the shrinkage using 
$\pi_S(S)-\pi_S(S^j) \leq 1 $
\begin{eqnarray*}
\Delta_i(S, \pi_S) 
& \leq & 
(\pi_S(h_0) + (1-\delta_i)\pi_S(R)) \delta_i \pi_S(R) + \delta_i \pi_S(R) 
\leq  2 \delta_i \pi_S(R) 
\end{eqnarray*}
and lower bound the shrinkage using 
$\pi_S(h_0) > 1/2$ and
$\pi_S(S)-\pi_S(S^j) > \pi_S(h_0) + (1-\delta_i)\pi_S(R)$ for any $j \in A'$
\begin{eqnarray*}
\Delta_i(S, \pi_S) 
&\geq & 2(\pi_S(h_0) + (1-\delta_i)\pi_S(R)) \delta_i \pi_S(R)  
\geq \delta_i \pi_S(R) 
\end{eqnarray*}

Let $q_i$ be any question and $q_j$ be the question chosen by the greedy tree
giving  $\Delta_j(S, \pi_S) / c_j \geq \Delta_i(S, \pi_S) / c_i$.
Using the upper and lower bounds we derived, we then know 
$2\delta_j \pi_S(R) / c_j \geq \delta_i \pi_S(R) / c_i$ and can conclude
$2\delta_j /c_j \geq \delta_i/c_i$.  
Combining this with Lemma \ref{existcutrlma},
$\delta_j /c_j \geq 1/(2C^*(h_0)$.  
\end{proof}

The main theorem immediately follows from the next theorem.
\begin{theorem}If $T^*$ is any query tree for $\pi$ and $T^g$ is the
greedy query tree for $\pi$, then for any $S \subseteq H$
corresponding to the subtree $T^g_S$ of $T^g$,
\[ C(T^g_S, \pi_S) \leq 
12 C(T^*, \pi_S) \ln \frac{\pi(S)}{\min_{h \in S}\pi(h)} \]
\label{submainthm}
\end{theorem}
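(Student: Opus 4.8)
The plan is to prove the statement by induction on $|S|$, deducing Theorem \ref{mainthm} by specializing to $S=H$ (where $\pi_S=\pi$, $\pi(S)=1$, and the log term becomes $\ln 1/\min_h\pi(h)$). The base case $|S|=1$ is immediate: $T^g_S$ is a single leaf so $C(T^g_S,\pi_S)=0$, and the right-hand side is $12\,C(T^*,\pi_S)\ln 1=0$. For the inductive step, let $q_i$ be the root question of $T^g_S$, splitting $S$ into parts $\{S^j\}_{j\in A}$; since the greedy choice has positive shrinkage when $|S|>1$, each $S^j$ is a strict subset. By Lemma \ref{decomplem}, $C(T^g_S,\pi_S)=c_i+\sum_j \pi_S(S^j)\,C(T^g_{S^j},\pi_{S^j})$, and applied to $T^*$ it gives $C(T^*,\pi_S)=\sum_j \pi_S(S^j)\,C(T^*,\pi_{S^j})$. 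Writing $L\triangleq\ln\frac{\pi(S)}{\min_{h\in S}\pi(h)}$, $L_j$ for the analogue on $S^j$, and $p_j\triangleq\pi_S(S^j)$, the inductive hypothesis on each child reduces the theorem to the single inequality $c_i\le 12\sum_j p_j\,C(T^*,\pi_{S^j})(L-L_j)$. Since $\min_{h\in S^j}\pi(h)\ge\min_{h\in S}\pi(h)$ and $\pi(S^j)\le\pi(S)$, I would use the clean bound $L-L_j\ge\ln(1/p_j)$ throughout, and then split on the collision probability $\mathsf{CP}(\pi_S)$.

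When $\mathsf{CP}(\pi_S)\le 1/2$, Corollary \ref{shrinkcor} together with the maximality of the greedy choice gives $\Delta_i(S,\pi_S)/c_i\ge(1-\mathsf{CP}(\pi_S))/C(T^*,\pi_S)\ge 1/(2C(T^*,\pi_S))$, hence $c_i\le 2C(T^*,\pi_S)\Delta_i(S,\pi_S)=\sum_j p_j C(T^*,\pi_{S^j})\cdot 2(1-\mathsf{CP}(\pi_S))$. Comparing with the target term by term, it suffices to check $2(1-\mathsf{CP}(\pi_S))\le 12\ln(1/p_j)$ for each $j$. This holds with room to spare: $\mathsf{CP}(\pi_S)\le 1/2$ forces every $p_j\le 1/\sqrt 2$, so $\ln(1/p_j)\ge\tfrac12\ln 2$ while the left side is at most $1$.

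The hard case is $\mathsf{CP}(\pi_S)>1/2$, where one object $h_0$ carries more than half the mass and the per-level argument breaks down (the child containing $h_0$ has $p_0>1/2$, so $\ln(1/p_0)$ is too small to pay for $c_i$). My plan is to unroll the greedy tree along the branch that keeps $h_0$, producing a chain $S=V_0\supseteq V_1\supseteq\cdots\supseteq V_T$. The crucial observation I would establish is that peeling non-$h_0$ mass off a version space in which $h_0$ already holds more than half the mass only raises $h_0$'s relative share, so $\mathsf{CP}(\pi_{V_t})>1/2$ persists at every step and the chain can only terminate at $V_T=\{h_0\}$. Then Lemma \ref{greedycutrlma} applies at every chain node, giving $\delta^{(t)}/c^{(t)}>1/(2C^*(h_0))$ with the same $C^*(h_0)=C(T^*,\pi_{\{h_0\}})$ throughout, so $c^{(t)}\le 2C^*(h_0)\delta^{(t)}$. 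Telescoping the masses of $R_t=V_t\setminus\{h_0\}$ and using $-\ln(1-x)\ge x$ yields $\sum_{t<T}\delta^{(t)}\le\ln\frac{\pi(R_0)}{\pi(R_{T-1})}+1\le L+1$ (since $\pi(R_{T-1})\ge\min_h\pi(h)$), so the chain questions cost at most $2C^*(h_0)(L+1)$.

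It then remains to account for the subtrees hanging off the chain. Each is a strict subset of $S$, so the inductive hypothesis applies, and by Lemma \ref{decomplem} their optimal-cost masses sum to $C(T^*,\pi_S)-\pi_S(h_0)C^*(h_0)$; bounding each off-branch $L_t^j$ by $L$, their total greedy cost is at most $12L\,(C(T^*,\pi_S)-\pi_S(h_0)C^*(h_0))$, while the tail $V_T=\{h_0\}$ contributes nothing. Adding the chain cost, the greedy cost is at most $12L\,C(T^*,\pi_S)-12L\,\pi_S(h_0)C^*(h_0)+2C^*(h_0)(L+1)$, which is at most the target $12L\,C(T^*,\pi_S)$ precisely because $\pi_S(h_0)>1/2$ and $L\ge\ln 2$ (the latter holding since the lightest element has less than half the mass of $S$), giving $2(L+1)\le 12L\,\pi_S(h_0)$. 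I expect the delicate points to be verifying that the chain never leaves the high-collision regime and that the $C^*(h_0)$ surplus released by the off-branch recursion exactly pays for isolating $h_0$ along the chain; shrinkage monotonicity (Corollary \ref{mondeccor}) and Lemmas \ref{existcutrlma}--\ref{greedycutrlma} are the tools that make this bookkeeping close with the single constant $12$.
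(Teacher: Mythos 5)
Your overall route is the paper's: induction on $|S|$, a split on $\mathsf{CP}(\pi_S)$, Corollary \ref{shrinkcor} to pay for the root cost in the low-collision case, and in the high-collision case an unrolling of the greedy path to $h_0$ with Lemma \ref{greedycutrlma} applied at every level, a telescoping/exponential-decay bound on $\pi(R_d)$, and the inductive hypothesis plus Lemma \ref{decomplem} on the off-branch subtrees. Your Case 2 accounting is correct and in fact slightly cleaner than the paper's (you charge the raw chain cost $C_D$ once, using $\pi_S(h_0)+\pi_S(R_0)=1$, where the paper uses $2\pi_S(h_0)C_D$), and the final balance $2(L+1)\le 12L\,\pi_S(h_0)$ via $\pi_S(h_0)>1/2$ and $L\ge\ln 2$ matches the paper's closing step.

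However, Case 1 as written contains a genuine error. You equate $2C(T^*,\pi_S)\Delta_i(S,\pi_S)$ with $\sum_j p_j C(T^*,\pi_{S^j})\cdot 2(1-\mathsf{CP}(\pi_S))$; this conflates the shrinkage $\Delta_i(S,\pi_S)=1-\sum_j p_j^2$ (a sum over \emph{branches}) with $1-\mathsf{CP}(\pi_S)=1-\sum_h \pi_S(h)^2$ (a sum over \emph{hypotheses}), and the former is only $\le$ the latter. Worse, the verification rests on the claim that $\mathsf{CP}(\pi_S)\le 1/2$ forces every $p_j\le 1/\sqrt 2$, which is false: a branch can carry mass $p_j=0.99$ composed of many tiny-mass hypotheses while $\mathsf{CP}(\pi_S)$ stays near $0$, and then your stated sufficient condition $2(1-\mathsf{CP}(\pi_S))\le 12\ln(1/p_j)$ fails (left side near $2$, right side near $0.12$). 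The step is repairable without changing your structure: what you actually need is $2\Delta_i(S,\pi_S)\le 12\ln(1/p_j)$ for each $j$, and this holds because $\Delta_i(S,\pi_S)\le 1-p_j^2\le 2(1-p_j)\le 2\ln(1/p_j)$; the paper instead bounds all children by the heaviest branch $S^+$ and converts $-\ln\pi_S(S^+)\ge 1-\pi_S(S^+)\ge\Delta_i(S,\pi_S)$, which is the same idea done at the sum level. A second, minor imprecision: in Case 2 you assert $\mathsf{CP}(\pi_{V_t})>1/2$ persists down the chain, but peeling non-$h_0$ mass only guarantees $\pi_{V_t}(h_0)>1/2$ persists (the collision probability itself can drop below $1/2$ if the residual mass is spread thin); since the proof of Lemma \ref{greedycutrlma} only uses $\pi_S(h_0)>1/2$, this does not break the argument, but the condition you should carry down the chain is the mass of $h_0$, not $\mathsf{CP}$.
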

\begin{proof}
In this proof we use $C^*(S)$ as a short hand for $C(T^*,\pi_S)$.
Also, we use $\min(S)$ for $\min_{s \in S}\pi(S)$.  We proceed with
induction on $|S|$.  For $|S|=1$, $C(T^g_S, \pi_S)$ is zero and the
claim holds.  For $|S| > 1$, we consider two cases.

\noindent \underline{\textbf{Case one:} $\mathsf{CP}(\pi_S) \leq 1/2$}

At the root of $T^g_S$, the greedy query tree chooses some $q_i$
with cost $c_i$ that reduces the version space to $S^j$ when
$q_i(h^*)=j$.  Let 
$\pi(S^{+}) \triangleq \max \lbrace \pi(S^j) : j \in A \rbrace$
Using the inductive hypothesis
\begin{eqnarray*}
C(T^g_S, \pi_S) 
& = & c_i + \sum_{j \in A} \pi_S(S^j) C(T_{S^j}, \pi_{S^j}) \\
& \leq & c_i + \sum_{j \in A} 12 \pi_S(S^j) C^*(S^j)
\ln \frac{\pi(S^j)}{\min(S^j)} \\
& \leq & c_i + 12(\sum_{j \in A} \pi_S(S^j) C^*(S^j)) 
\ln \frac{\pi(S^{+})}{\min(S)} 
\end{eqnarray*}
Now using Lemma \ref{decomplem},
$\pi(S^{+})=\pi(S) \pi_S(S^+)$, and then $ln(1-x) \leq -x$
\begin{eqnarray*}
C(T^g_S, \pi_S)
& \leq & c_i + 12 C^*(S) \ln \frac{\pi(S)}{\min(S)} 
+ 12 C^*(S) \ln \pi_S(S^{+}) \\
& \leq & c_i + 12 C^*(S) \ln \frac{\pi(S)}{\min(S)} 
- 12 C^*(S) (1 - \pi_S(S^{+}))
\end{eqnarray*}
$\pi_S(S^+) \geq
\sum_{j \in A} \pi_S(S^j)^2$ because this sum is an expectation
and $\forall_j$ $\pi_S(S^+) \geq \pi_S(S^j)$.
From this follows
\begin{eqnarray*}
C(T^g_S, \pi_S)
& \leq  & c_i + 12 C^*(S) \ln \frac{\pi(S)}{\min(S)}
- 12 C^*(S) (1 - \sum_{j \in A}\pi_S(S^j)^2) \\
& = & c_i + 12 C^*(S) \ln \frac{\pi(S)}{\min(S)} 
 - 12 C^*(S) c_i \frac{(1 - \sum_{j \in A}\pi_S(S^j)^2))}{c_i} 
\end{eqnarray*}
$(1 - \sum_{j \in A}\pi_S(S^j)^2)$ is 
$\Delta_i(S, \pi_S)$, so by Corollary \ref{shrinkcor} and using
$\mathsf{CP}(\pi_S) \leq 1/2$
\begin{eqnarray*}
C(T^g_S, \pi_S)
& \leq  & c_i + 12 C^*(S) \ln \frac{\pi(S)}{\min(S)} 
 - 12 C^*(S) c_i \frac{1 - \mathsf{CP}(\pi_S)}{C^*(S)} \\
& = & c_i+12 C^*(S) \ln \frac{\pi(S)}{\min(S)} - 12(1 - \mathsf{CP}(\pi_S))c_i \\
& \leq & 12 C^*(S) \ln \frac{\pi(S)}{\min(S)}
\end{eqnarray*}
which completes this case.

\noindent \underline{\textbf{Case two:} $\mathsf{CP}(\pi_S) > 1/2$}

The hypothesis with more than half the mass, $h_0$, 
lies at some depth $D$ in the greedy tree $T^g_S$.  
Counting the root of $T^g_S$ as depth $0$, $D \geq 1$. 
At depth $d>0$, let $q_0, q_1, ... q_{d-1}$ be the questions
asked so far, $c_0, c_1, ... c_{d-1}$ be the costs of these questions, 
and $C_d = \sum_{i=0}^{d-1} c_i$ be the total cost incurred.  At
the root, $C_0=0$.  

At depth $d < D$, we define $R_d$ to be the set of objects other than
$h_0$ that are still in the version space along the path to $h_0$.
$R_0 \triangleq S \setminus \lbrace h_0 \rbrace$
and for $d > 0$
$R_d \triangleq 
R_{d-1} \setminus \lbrace h : q_{d-1}(h) \neq q_{d-1}(h_0) \rbrace $.
In other words, $R_d$ is $R_{d-1}$ with the objects that
disagree with $h_0$ on $q_{d-1}$ removed.  
All of the objects in $R_d$ have the same response as $h_0$ 
for $q_0, q_1, ..., q_{d-1}$.  The right of Figure \ref{counterfig} 
shows this case.

We first bound the mass  remaining in $R_d$ as a function of the label cost 
incurred so far. 
For $d>0$, using Lemma \ref{greedycutrlma}, 
\begin{eqnarray*}
\pi(R_d) & \leq & \pi(R_0) \prod_{i=0}^{d-1} (1 - \frac{c_i}{2C^*(h_0)}) 
\leq \pi(R_0) e^{-C_d / (2C^*(h_0))} 
\end{eqnarray*}
Using this bound, we can bound $C_D$, the cost of identifying
$h_0$ (i.e. $C(T^g_S, h_0)$). 
First note that $\pi(R_{D-1}) \geq \min(R_0)$ since at
least one object is left in $R_{D-1}$.  Combining this with the
upper bound on the mass of $R_d$, we have if $D-1>0$.
\[ C_{D-1} \leq 2 C^*(h_0) \ln(\pi(R_0)/\min(R_0)) \]
This clearly also holds if $D-1=0$, since, $C_0=0$.
We now only need to bound the cost of the final question (the question
asked at level $D-1$).  If the final question had cost greater than
$2C^*(h_0)$, then by Lemma \ref{greedycutrlma}, this question would
reduce the mass of the set containing $h_0$ to less than $\pi(h_0)$.
This is a contradiction, so the final question must have cost no greater
than $2C^*(h_0)$.  
\begin{eqnarray*}
C_{D} & \leq & 2 C^*(h_0) \ln\frac{\pi(R_0)}{\min(R_0)} + 2C^*(h_0) \\
\end{eqnarray*}

We use $A_{d-1}' \triangleq A \setminus q_{d-1}(h_0)$.  
Let $s \in S_d^j$ be the set of objects removed from $R_{d-1}$ with the
question at depth $d-1$ such that $q_{d-1}(s)=j$, that is
$R_{d-1}=R_d+\bigcup_{j \in A_{d-1}'}S_d^j$. 
Let $S_d = \bigcup_{j \in A_{d-1}'} S_d^j$.
The right of Figure \ref{counterfig} illustrates this notation.
A useful variation of Lemma \ref{decomplem} we use
in the following is that for
$S = S^1 \cup S^2$ and $S^1 \cap S^2 = \emptyset$,
$\pi(S) C^*(S) = \pi(S^1) C^*(S^1) + \pi(S^2) C^*(S^2)$.

We can write
\begin{eqnarray*}
\pi(S)C(T^g_S, \pi_S)
& \stackrel{a}{=} & \pi(h_0) C_D + \sum_{d=1}^{D} \sum_{j \in A_{d-1}'}
\pi(S_d^j) (C_d + C(T_{S_d^j}, \pi_{S_d^j})) \\
& \stackrel{b}{\leq} & \pi(h_0) C_D + \sum_{d=1}^{D} \pi(S_d) C_d  
 + \sum_{d=1}^{D} \sum_{j \in A_{d-1}'}
\pi(S_d^j) 12 C^*(S_d^j) \ln \frac{\pi(S_d^j)}{\min(S_d^j)} \\
& \stackrel{c}{\leq} & \pi(h_0) C_D + 
\pi(R_0) C_D + 12 \pi(R_0) C^*(R_0) \ln \frac{\pi(R_0)}{\min(R_0)} \\
& \stackrel{d}{\leq} & 2 \pi(h_0) C_D + 
12 \pi(R_0) C^*(R_0) \ln \frac{\pi(R_0)}{\min(R_0)} \\
\end{eqnarray*}
Here a) decomposes the total cost into the cost of identifying
$h_0$ and the cost of each branch leaving the path to $h_0$.
For each of these branches the total cost is the cost incurred so far
plus the cost of the tree rooted at that branch.
b) uses the inductive hypothesis, c) uses 
$\forall_{i,j} S_i \cap S_j = \emptyset$ and $\bigcup_d S_d = R_0$,
and d) uses $\pi(R_0)<\pi(h_0)$.  Continuing
\begin{eqnarray*}
\pi(S)C(T^g_S, \pi_S)
& \stackrel{a}{\leq} & 4 \pi(h_0) C^*(h_0) 
(\ln \frac{\pi(R_0)}{\min(R_0)} + 1)
+ 12 \pi(R_0) C^*(R_0) \ln \frac{\pi(R_0)}{\min(R_0)} \\
& \stackrel{b}{\leq} &  4 \pi(h_0) C^*(h_0) 
(\ln \frac{\pi(S)}{\min(S)} + 1)
+ 12 \pi(R_0) C^*(R_0) \ln \frac{\pi(S)}{\min(S)} \\
\end{eqnarray*}
where a) uses our bound on $C_D$ and b) uses $R_0 \subset S$.  Finally
\begin{eqnarray*}
\pi(S)C(T^g_S, \pi_S) 
& \leq &  12 \pi(h_0) C^*(h_0) \ln \frac{\pi(S)}{\min(S)} 
 + 12 \pi(R_0) C^*(R_0) \ln \frac{\pi(S)}{\min(S)} \\
& = & \pi(S) 12 C^*(S) \ln \frac{\pi(S)}{\min(S)} 
\end{eqnarray*}
where we use $\pi(S) > 2 \min(S)$ and therefore 
$\ln \frac{\pi(S)}{\min(S)} > \ln 2 > .5$.  Dividing both sides by $\pi(S)$
gives the desired result.
\end{proof}

\section{Distribution Independent Bound}

We now show the dependence on $\pi$ can be removed using a variation
of the rounding trick used by \citet{optimalsplit} and
\citet{decisiontreesentity}.  The intuition behind this trick is that
we can round up small values of $\pi$ to obtain a distribution $\pi'$
in which $\ln (1 / \min_{h \in H} \pi'(h)) = O(\ln n)$ while ensuring
that for any tree $T$, $C(T, \pi)/C(T, \pi')$ is bounded above and
below by a constant.  
Here $n=|H|$.
When the greedy algorithm is applied to this
rounded distribution, the resulting tree gives an $O(\log n)$
approximation to the optimal tree for the original distribution.  In
our cost sensitive setting, the intuition remains the same, but
the introduction of costs changes the result.

Let $c_{\max} \triangleq \max_{i} c_i$ and $c_{\min} \triangleq
\min_{i} c_i$.  In this discussion, we consider \emph{irreducible
  query trees}, which we define to be query trees which contain only
questions with non-zero shrinkage.  Greedy query trees will always
have this property as will optimal query trees.  This property let's
us assume any path from the root to a leaf has at most $n$ nodes with
cost at most $c_{\max}n$ because at least one hypothesis is eliminated
by each question.  Define $\pi'$ to be the distribution obtained from
$\pi$ by adding $c_{\min}/(c_{\max}n^3)$ mass to any hypothesis $h$
for which $\pi(h) < c_{\min}/(c_{\max}n^3)$.  Subtract the
corresponding mass from a single hypothesis $h_j$ for which $\pi(h_j)
\geq 1/n$ (there must at least one such hypothesis).  By construction,
we have that $\min_{i} \pi'(h_i) \geq c_{\min}/(c_{\max}n^3)$.  We can
also bound the amount by which the cost of a tree changes as a result
of rounding
\begin{lemma} For any irreducible query tree $T$ and $\pi$, 
\[ \frac{1}{2}C(T, \pi) \leq C(T, \pi') \leq \frac{3}{2} C(T, \pi) \] 
\label{costdistlma}
\end{lemma}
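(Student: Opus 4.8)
The plan is to exploit that tree cost is \emph{linear} in the distribution. Since $C(T, \pi) = \sum_{h \in H} \pi(h) c_T(h)$, the perturbation caused by rounding is just
\[ C(T, \pi') - C(T, \pi) = \sum_{h \in H} (\pi'(h) - \pi(h)) c_T(h). \]
I would bound the absolute value of this difference by a quantity of order $c_{\min}/n$, and then observe that $C(T, \pi)$ is itself at least $c_{\min}$, so that the perturbation is at most $\frac{1}{2} C(T, \pi)$. Both displayed inequalities then follow at once from $|C(T, \pi') - C(T, \pi)| \leq \frac{1}{2} C(T, \pi)$, since this says $C(T,\pi')$ lies within $[\frac{1}{2}C(T,\pi), \frac{3}{2}C(T,\pi)]$.

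For the first step I would note that the only hypotheses whose mass changes are the ``light'' ones $L \triangleq \{ h : \pi(h) < c_{\min}/(c_{\max} n^3) \}$, each gaining exactly $c_{\min}/(c_{\max} n^3)$, together with the single hypothesis $h_j$, which loses the entire added mass $|L| \cdot c_{\min}/(c_{\max} n^3) \leq c_{\min}/(c_{\max} n^2)$. Using the irreducibility bound $c_T(h) \leq c_{\max} n$ on every leaf, the total positive contribution $\sum_{h \in L} (c_{\min}/(c_{\max} n^3)) c_T(h)$ and the single negative contribution from $h_j$ are each at most $(c_{\min}/(c_{\max} n^2)) \cdot c_{\max} n = c_{\min}/n$. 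Hence $|C(T, \pi') - C(T, \pi)| \leq c_{\min}/n$.

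For the second step I would use $|H| > 1$: the root of $T$ must be a question, so every leaf lies at depth at least one and is reached by at least one query of cost at least $c_{\min}$, giving $c_T(h) \geq c_{\min}$ for all $h$ and therefore $C(T, \pi) = \sum_h \pi(h) c_T(h) \geq c_{\min}$. Combined with $n \geq 2$ this yields $c_{\min}/n \leq c_{\min}/2 \leq \frac{1}{2} C(T, \pi)$, which closes the argument and delivers both sides of the bound simultaneously.

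The main thing to get right is the two-sided control $c_{\min} \leq c_T(h) \leq c_{\max} n$ on path costs and the exact scaling of the rounding increment. The lower bound on $c_T(h)$ rests on $|H|>1$, the upper bound rests on irreducibility (each question eliminates at least one hypothesis, so paths have fewer than $n$ nodes), and the factor $c_{\min}/c_{\max}$ built into the increment is precisely what makes the $c_{\max}$ and $c_{\min}$ cancel so that the perturbation is governed by $c_{\min}/n$ rather than by the magnitudes of the costs. This cancellation is the cost-sensitive wrinkle relative to the unit-cost rounding argument of the earlier work.
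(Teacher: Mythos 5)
Your proof is correct and follows essentially the same route as the paper's: both bound the perturbation $\sum_h (\pi'(h)-\pi(h))c_T(h)$ by $c_{\min}/n$ using the irreducibility bound $c_T(h) \leq c_{\max} n$ and the total rounding mass $\leq c_{\min}/(c_{\max}n^2)$, then absorb this into $\frac{1}{2}C(T,\pi)$ via $C(T,\pi) \geq c_{\min}$. The only cosmetic difference is that you package the two inequalities as a single symmetric bound $|C(T,\pi')-C(T,\pi)| \leq \frac{1}{2}C(T,\pi)$, whereas the paper handles them separately.
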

\begin{proof}
For the first inequality, let $h'$ be the hypothesis we subtract
mass from when rounding.  The cost to identify $h'$,
$c_T(h')$ is at most $c_{\max} n$.
Since we subtract at most
$c_{\min}/(c_{\max}n^2)$ mass and $c_T(h') \leq c_{\max}n$, we then have
\begin{align*}
C(T, \pi') & \geq C(T, \pi) - \frac{c_{\min}}{c_{\max}n^2} c_T(h') 
\geq C(T, \pi) - \frac{c_{\min}}{n} 
\geq \frac{1}{2} C(T, \pi)
\end{align*}
The last step uses and $C(T, \pi) > c_{\min}$ and $n>2$.
For the second inequality, we add at most $c_{\min}/(c_{\max}n^3)$
mass to each hypothesis and $\sum_{h} c_T(h) < c_{\max} n^2$, so
\begin{align*}
C(T, \pi') & \leq C(T, \pi) + \sum_{h \in H} \frac{c_{\min}}{c_{\max}n^3} c_T(h) 
\leq C(T, \pi) + \frac{c_{\min}}{n} 
\leq \frac{3}{2} C(T, \pi)  
\end{align*}
The last step again uses $C(T, \pi) > c_{\min}$ and $n>2$
\end{proof}

We can finally give a bound on the greedy algorithm applied to $\pi'$,
in terms of $n$ and $c_{max}/c_{min}$
\begin{theorem}  
For any $\pi$ the greedy query tree $T^g$ for $\pi'$ has cost 
at most 
\[ C(T^g, \pi) \leq O(C^* \ln (n \frac{c_{\max}}{c_{\min}})) \]
where $C^* \triangleq \min_T C(T, \pi)$.  \label{distinthm} \end{theorem}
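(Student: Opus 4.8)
The plan is to chain together the cost-independent bound of Theorem \ref{mainthm}, now applied to the rounded distribution $\pi'$, with the two-sided cost comparison of Lemma \ref{costdistlma}. The guiding observation is that $T^g$ is by definition the greedy tree \emph{for} $\pi'$, so Theorem \ref{mainthm} directly controls $C(T^g, \pi')$ in terms of the optimal cost for $\pi'$ and the quantity $\ln 1/(\min_h \pi'(h))$; the construction of $\pi'$ forces this log term to be $O(\ln(n c_{\max}/c_{\min}))$, while Lemma \ref{costdistlma} lets us pass back and forth between costs measured against $\pi$ and against $\pi'$ at the price of a constant factor. Since both greedy and optimal trees are irreducible, the lemma is applicable to every tree we need.

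Concretely, I would proceed in four steps. First, since $T^g$ is irreducible, Lemma \ref{costdistlma} gives $C(T^g, \pi) \leq 2 C(T^g, \pi')$, moving the quantity we want to bound onto the rounded distribution. Second, because $T^g$ is the greedy tree for $\pi'$, Theorem \ref{mainthm} applied with $\pi'$ yields $C(T^g, \pi') \leq 12 (\min_T C(T, \pi')) \ln 1/(\min_h \pi'(h))$. Third, I relate the optimal cost for $\pi'$ back to $C^*$: taking $T^*$ to be an optimal (hence irreducible) tree for $\pi$, the upper half of Lemma \ref{costdistlma} gives $\min_T C(T, \pi') \leq C(T^*, \pi') \leq \tfrac{3}{2} C(T^*, \pi) = \tfrac{3}{2} C^*$. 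Fourth, the construction of $\pi'$ guarantees $\min_h \pi'(h) \geq c_{\min}/(c_{\max} n^3)$, so $\ln 1/(\min_h \pi'(h)) \leq 3 \ln n + \ln(c_{\max}/c_{\min}) = O(\ln(n c_{\max}/c_{\min}))$. Multiplying the constants from these four steps collapses everything into $C(T^g, \pi) \leq 2 \cdot 12 \cdot \tfrac{3}{2}\, C^* \cdot O(\ln(n c_{\max}/c_{\min})) = O(C^* \ln(n c_{\max}/c_{\min}))$, as claimed.

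Almost every step here is routine substitution once the earlier results are in hand, so the difficulty is one of bookkeeping rather than of genuine obstruction. I must confirm that $T^g$ and $T^*$ really are irreducible so that Lemma \ref{costdistlma} is licensed to apply; this is exactly the remark preceding the lemma, that greedy and optimal trees use only questions of nonzero shrinkage. I also have to be careful that Theorem \ref{mainthm} is invoked against the \emph{same} distribution $\pi'$ for which $T^g$ is greedy, so that the $\min_T C(T, \pi')$ appearing there is the optimum for $\pi'$ and not for $\pi$; conflating the two distributions is the one place where an error could creep in, since the theorem's approximation guarantee is only valid when the tree under analysis is the greedy tree for the very distribution in its bound. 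Everything else, including the split $\ln(c_{\max} n^3/c_{\min}) = 3\ln n + \ln(c_{\max}/c_{\min})$ and the collapse of the numerical constants into the $O(\cdot)$, is elementary, so I expect the final write-up to be short.
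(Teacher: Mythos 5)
Your proposal is correct and follows essentially the same chain as the paper's proof: bound $C(T^g,\pi)$ by $2C(T^g,\pi')$ via Lemma \ref{costdistlma}, apply Theorem \ref{mainthm} to the greedy tree for $\pi'$, compare the optimum for $\pi'$ to $C^*$ via the other half of Lemma \ref{costdistlma}, and use $\min_h \pi'(h) \geq c_{\min}/(c_{\max}n^3)$ to control the log term. The paper's proof is exactly this four-step computation (yielding the explicit constant $108$), so there is nothing to add.
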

\begin{proof}
Let $T'$ be an optimal tree for $\pi'$ and $T^*$ be an optimal
tree for $\pi$.
Using Theorem \ref{mainthm},  $\min_{i} \pi'(h_i) \geq c_{\min}/(c_{\max}n^3)$, 
and  Lemma \ref{costdistlma}.
\begin{align*}
C(T^g, \pi) \leq & 2C(T^g, \pi') 
\leq 72 C(T', \pi') \ln (n \frac{c_{\max}}{c_{\min}}) \\
 \leq & 72 C(T^*, \pi') \ln (n \frac{c_{\max}}{c_{\min}}) 
 \leq  108 C(T^*, \pi) \ln (n \frac{c_{\max}}{c_{\min}})
\end{align*}
\end{proof}

\section{$\epsilon$-Approximate Algorithm}

Some of the non traditional active learning scenarios involve a large
number of possible questions.  For example, in the batch active
learning scenario we describe, there may be a question corresponding
to every subset of single data point questions.  In these scenarios,
it may not be possible to exactly find the question with largest
shrinkage-cost ratio.  It is not hard
to extend our analysis to a strategy that at each step finds a
question $q_i$ with 
\[ \Delta_i(S, \pi_S) / c_i \geq (1 - \epsilon)
\max_j \Delta_j(S, \pi_S) / c_j \]
for $\epsilon \in [0, 1)$.  
We call this the $\epsilon$-approximate
cost sensitive greedy algorithm.  Algorithm \ref{approxgreedyalg} outlines this
strategy.  We show $\epsilon > 0$ only introduces an
$1/(1 - \epsilon)$ factor into the bound.  \citet{optimalsplit}
report a similar extension to their result.

\begin{algorithm}[t]
\caption{$\epsilon$-Approximate Cost Sensitive Greedy Algorithm}
\begin{algorithmic}[1]
\STATE $S \Leftarrow H$
\REPEAT
\STATE Find $i$ so
$\Delta_i(S, \pi_S) / c_i > (1 - \epsilon) \max_j \Delta_j(S, \pi_S) / c_j$
\STATE $S \Leftarrow \lbrace s \in S : q_i(s) = q_i(H) \rbrace$
\UNTIL{$|S| = 1$}
\end{algorithmic}
\label{approxgreedyalg}
\end{algorithm}

\begin{theorem}
For any $\pi$ the $\epsilon$-approximate 
greedy query tree $T$ has cost at most 
\[ C(T, \pi) \leq (12 / (1-\epsilon)) C^* \ln 1/(\min_{h \in H} \pi(h)) \] 
where $C^* = \min_T C(T, \pi)$. \end{theorem}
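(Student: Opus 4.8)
The plan is to re-run the inductive argument of Theorem~\ref{submainthm} essentially verbatim, replacing the constant $12$ everywhere by $K \triangleq 12/(1-\epsilon)$, and to check that the only loss introduced by approximate maximization --- a factor $(1-\epsilon)$ in the shrinkage-cost ratio of the chosen question --- is exactly absorbed by the inflated constant. Concretely, I would prove the $\epsilon$-approximate analogue of Theorem~\ref{submainthm}, namely $C(T_S,\pi_S) \leq K\, C(T^*,\pi_S)\ln\frac{\pi(S)}{\min(S)}$ (writing $\min(S)$ for $\min_{h\in S}\pi(h)$ and $C^*(S)$ for $C(T^*,\pi_S)$) for every $S$ corresponding to a subtree $T_S$ of the $\epsilon$-approximate tree, by induction on $|S|$; the final theorem then follows by taking $S = H$, exactly as Theorem~\ref{mainthm} follows from Theorem~\ref{submainthm}. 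The base case $|S|=1$ is unchanged, and the split into the two cases $\mathsf{CP}(\pi_S)\leq 1/2$ and $\mathsf{CP}(\pi_S)>1/2$ is kept.

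In Case one the greedy property is invoked only through Corollary~\ref{shrinkcor}, which guarantees $\max_j \Delta_j(S,\pi_S)/c_j \geq (1-\mathsf{CP}(\pi_S))/C^*(S)$. The question $q_i$ selected by the $\epsilon$-approximate rule satisfies $\Delta_i(S,\pi_S)/c_i \geq (1-\epsilon)(1-\mathsf{CP}(\pi_S))/C^*(S)$. Carrying $K$ through the identical chain of inequalities, the final subtracted term becomes $-K\,C^*(S)\,c_i\,\frac{\Delta_i(S,\pi_S)}{c_i} \leq -K c_i (1-\epsilon)(1-\mathsf{CP}(\pi_S)) = -12\, c_i (1-\mathsf{CP}(\pi_S))$, since $K(1-\epsilon)=12$. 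This is precisely the term appearing in the exact-greedy proof, so the same use of $\mathsf{CP}(\pi_S)\leq 1/2$ gives $c_i - 12 c_i(1-\mathsf{CP}(\pi_S)) \leq 0$ and closes this case.

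For Case two I would first restate Lemma~\ref{greedycutrlma} for approximate maximization: since the selected question now only satisfies $\Delta_j(S,\pi_S)/c_j \geq (1-\epsilon)\,\Delta_i(S,\pi_S)/c_i$ for the witness $q_i$ of Lemma~\ref{existcutrlma}, the same sandwiching of $\Delta$ between $\delta\pi_S(R)$ and $2\delta\pi_S(R)$ yields $\delta_j/c_j > (1-\epsilon)/(2C^*(h_0))$. This scales the per-step reduction bound to $\pi(R_d) \leq \pi(R_0)\,e^{-(1-\epsilon)C_d/(2C^*(h_0))}$, hence $C_D \leq \frac{2C^*(h_0)}{1-\epsilon}\left(\ln\frac{\pi(R_0)}{\min(R_0)}+1\right)$, and it also forces the final question to cost at most $2C^*(h_0)/(1-\epsilon)$. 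Feeding these into the decomposition of $\pi(S)C(T_S,\pi_S)$ with inductive constant $K$, the $h_0$-contribution becomes $\frac{4\pi(h_0)C^*(h_0)}{1-\epsilon}\left(\ln\frac{\pi(S)}{\min(S)}+1\right)$, and I would verify this is at most $K\,\pi(h_0)C^*(h_0)\ln\frac{\pi(S)}{\min(S)}$. Dividing out the common factor $\frac{\pi(h_0)C^*(h_0)}{1-\epsilon}$, this reduces to exactly the inequality $4(\ln X + 1)\leq 12\ln X$ with $X=\pi(S)/\min(S)>2$ used in the original proof, which holds because $\mathsf{CP}(\pi_S)>1/2$ forces $\ln X > \ln 2 > 1/2$.

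The main obstacle, and the only place needing genuine care, is Case two: the factor $(1-\epsilon)$ enters in two coupled places, the per-step reduction rate and therefore the whole bound on $C_D$ (the cost of reaching $h_0$), so after replacing $12$ by $K=12/(1-\epsilon)$ one must confirm that the two scalings combine to leave a residual inequality identical to the exact-greedy one. Because every $(1-\epsilon)^{-1}$ introduced by the weaker selection appears symmetrically on both sides of that final inequality, it cancels, and no hypothesis beyond $\epsilon\in[0,1)$ is required; this is exactly why the approximation costs only the clean multiplicative factor $1/(1-\epsilon)$.
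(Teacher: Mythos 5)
Your proposal is correct and follows essentially the same route as the paper: the paper likewise proves $\epsilon$-weakened versions of Corollary~\ref{shrinkcor} and Lemma~\ref{greedycutrlma} (picking up the factor $(1-\epsilon)$ in the shrinkage-cost ratio and in $\delta_i/c_i$), reruns the induction of Theorem~\ref{submainthm} with the constant $12/(1-\epsilon)$, and observes that the $(1-\epsilon)$ factors cancel in Case one and in the final inequality of Case two. Your accounting of where the $1/(1-\epsilon)$ enters (the bound on $C_D$ and the final-question cost) and why it cancels matches the paper's argument.
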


This theorem follows from extensions of Corollary \ref{shrinkcor},
Lemma \ref{greedycutrlma}, and Theorem \ref{submainthm}.  
The proofs are straightforward, but we outline them below for completeness.
It is also straightforward to derive a similar extension of
Theorem \ref{distinthm}.
This corollary follows directly from Corollary \ref{shrinkcor} and
the $\epsilon$-approximate algorithm.

\begin{corollary}For any 
$S \subseteq H$ and query tree $T$ whose leaves contain $S$, the
question $q_i$ chosen by an $\epsilon$-approximate query
tree has $\Delta_i(S, \pi_S) / c_i
\geq (1-\epsilon) (1 - \mathsf{CP}(\pi_S)) / C(T, \pi_S)$ 
\label{epshrinkcor} \end{corollary}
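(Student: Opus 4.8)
The plan is to chain the existence guarantee of Corollary \ref{shrinkcor} with the defining inequality of the $\epsilon$-approximate algorithm. The key observation is that Corollary \ref{shrinkcor} already establishes the bound for the \emph{best} available question, and the $\epsilon$-approximate algorithm is only guaranteed to pick a question within a $(1-\epsilon)$ factor of that best question's shrinkage-cost ratio, so the two facts compose directly.

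Concretely, I would first invoke Corollary \ref{shrinkcor} to assert that \emph{some} question achieves shrinkage-cost ratio at least $(1 - \mathsf{CP}(\pi_S)) / C(T, \pi_S)$ on the version space $S$ under $\pi_S$. Since this is an existence statement, it immediately yields a lower bound on the maximum over all questions:
\[
\max_j \Delta_j(S, \pi_S) / c_j \geq (1 - \mathsf{CP}(\pi_S)) / C(T, \pi_S).
\]
Next I would apply the defining property of the $\epsilon$-approximate cost sensitive greedy algorithm (line 3 of Algorithm \ref{approxgreedyalg}), which guarantees that the chosen question $q_i$ satisfies $\Delta_i(S, \pi_S) / c_i \geq (1 - \epsilon) \max_j \Delta_j(S, \pi_S) / c_j$. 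Substituting the previous lower bound for the maximum gives
\[
\Delta_i(S, \pi_S) / c_i \geq (1 - \epsilon)\,(1 - \mathsf{CP}(\pi_S)) / C(T, \pi_S),
\]
which is exactly the claim.

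There is essentially no obstacle here: the entire argument is a single composition of two inequalities evaluated on the same $(S, \pi_S)$, and no new quantitative estimate is required. The only point worth checking for correctness is that the maximum appearing in the $\epsilon$-approximate guarantee ranges over the same set of questions and is taken with respect to the same restricted distribution $\pi_S$ as in Corollary \ref{shrinkcor}, which holds by definition of the algorithm. Consequently the proof is a two-line derivation, and this corollary then slots into the $\epsilon$-approximate versions of Lemma \ref{greedycutrlma} and Theorem \ref{submainthm} in place of Corollary \ref{shrinkcor}, propagating the extra $1/(1-\epsilon)$ factor through to the final bound.
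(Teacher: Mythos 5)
Your proof is correct and matches the paper's intent exactly: the paper simply states that this corollary ``follows directly from Corollary \ref{shrinkcor} and the $\epsilon$-approximate algorithm,'' and your two-step composition---using the existence statement of Corollary \ref{shrinkcor} to lower-bound $\max_j \Delta_j(S,\pi_S)/c_j$ and then applying the $(1-\epsilon)$ selection guarantee---is precisely that argument spelled out.
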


This lemma extends Lemma \ref{greedycutrlma} to the approximate case.

\begin{lemma} Consider any $\pi$ and $S \subseteq H$ with $\mathsf{CP}(\pi_S) >
1/2$ and a corresponding subtree $T^{\epsilon}_S$ in an $\epsilon$-approximate 
greedy tree. Let $C^*(h_0) = C(T^*, \pi_{\lbrace h_0 \rbrace})$ for any $T^*$.
The question $q_i$ chosen by $T^{\epsilon}_S$ 
has $\delta_i / c_i > (1-\epsilon)/(2C^*(h_0))$.
. \label{approxgreedycutrlma} \end{lemma}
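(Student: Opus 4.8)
The plan is to mirror the proof of Lemma \ref{greedycutrlma} almost verbatim, inserting the $(1-\epsilon)$ slack exactly where the exact-greedy argument invokes the optimality of the chosen question. First I would observe that the two shrinkage estimates established in that proof --- the upper bound $\Delta_i(S,\pi_S) \leq 2\delta_i \pi_S(R)$ and the lower bound $\Delta_i(S,\pi_S) \geq \delta_i \pi_S(R)$ --- are consequences of the hypothesis $\mathsf{CP}(\pi_S) > 1/2$ alone (which forces some $h_0$ with $\pi_S(h_0) > 1/2$) together with the definition of $\delta_i$. They make no reference to how the question is selected, so they carry over unchanged to the subtree $T^{\epsilon}_S$ of the $\epsilon$-approximate greedy tree.

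Next I would apply the defining property of the $\epsilon$-approximate algorithm: if $q_j$ is the question chosen by $T^{\epsilon}_S$ at the root, then for every $i$ we have $\Delta_j(S,\pi_S)/c_j \geq (1-\epsilon)\,\Delta_i(S,\pi_S)/c_i$. Chaining the upper bound on $\Delta_j$ with this selection inequality and then the lower bound on $\Delta_i$ gives $2\delta_j\pi_S(R)/c_j \geq \Delta_j(S,\pi_S)/c_j \geq (1-\epsilon)\Delta_i(S,\pi_S)/c_i \geq (1-\epsilon)\delta_i\pi_S(R)/c_i$. Cancelling the common positive factor $\pi_S(R)$ yields $2\delta_j/c_j \geq (1-\epsilon)\delta_i/c_i$, which is the approximate analog of the inequality $2\delta_j/c_j \geq \delta_i/c_i$ used in the exact case.

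Finally I would combine this with Lemma \ref{existcutrlma}, which guarantees a question $q_i$ with $\delta_i/c_i > 1/C^*(h_0)$. Substituting that question into the inequality above gives $\delta_j/c_j > (1-\epsilon)/(2C^*(h_0))$, which is precisely the claim. I expect no genuine obstacle here: the only content is to confirm that the shrinkage bounds of Lemma \ref{greedycutrlma} are independent of the selection rule (they are) and then to track the single $(1-\epsilon)$ factor introduced by the approximate $\operatorname{argmax}$. This is the same $1/(1-\epsilon)$ degradation that Corollary \ref{epshrinkcor} introduces into Corollary \ref{shrinkcor}, so the factor should propagate cleanly through the remaining induction of Theorem \ref{submainthm} in its $\epsilon$-approximate form.
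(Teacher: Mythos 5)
Your proposal is correct and is essentially identical to the paper's own proof: both reuse the shrinkage upper and lower bounds from Lemma \ref{greedycutrlma} (noting they are independent of the selection rule), insert the single $(1-\epsilon)$ factor at the approximate-argmax step to get $2\delta_j/c_j \geq (1-\epsilon)\delta_i/c_i$, and then invoke Lemma \ref{existcutrlma}. No differences worth noting.
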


\begin{proof}
The proof follows that of Lemma \ref{greedycutrlma}.  We show
the fraction of $R$ removed for unit cost by the $\epsilon$-approximate
greedy tree is at least $(1-\epsilon) / 2$ that of any other question.
Using Lemma \ref{existcutrlma} the result then follows. Let $q_i$ be any
question and $q_j$ be the
question chosen by an $\epsilon$-approximate greedy tree.  
$\Delta_j(S, \pi_S) / c_j \geq (1-\epsilon) \Delta_i(S, \pi_S) / c_i$.
Using upper and lower bounds from Lemma \ref{greedycutrlma}, we then know 
$2\delta_j \pi_S(R) / c_j \geq (1-\epsilon) 
\delta_i \pi_S(R) / c_i$ and can conclude
$2\delta_j / (c_j (1 - \epsilon)) \geq \delta_i/c_i$.  The lemma then
follows from Lemma \ref{existcutrlma}. 
\end{proof}

\begin{theorem}If $T^*$ is any query tree for $\pi$ and $T^{\epsilon}$ is an
$\epsilon$-approximate greedy query tree for $\pi$,
then for any $S \subseteq H$ corresponding to the subtree $T^{\epsilon}_S$ of 
$T^{\epsilon}$,
\[ C(T^{epsilon}_S, \pi_S) \leq 
\frac{12}{(1 - \epsilon)} C(T^*, \pi_S) \ln \frac{\pi(S)}{\min_{h \in S}\pi(h)} \]
\end{theorem}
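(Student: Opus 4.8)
The plan is to re-run the inductive argument of Theorem~\ref{submainthm} essentially verbatim, replacing the constant $12$ everywhere by $12/(1-\epsilon)$ and substituting the $\epsilon$-approximate analogues of the two structural facts that drive that proof: Corollary~\ref{epshrinkcor} in place of Corollary~\ref{shrinkcor}, and Lemma~\ref{approxgreedycutrlma} in place of Lemma~\ref{greedycutrlma}. Writing $C^*(S)$ for $C(T^*,\pi_S)$, $C^*(h_0)$ as in Lemma~\ref{approxgreedycutrlma}, and $\min(S)$ for $\min_{s \in S}\pi(s)$, I would induct on $|S|$, with the $|S|=1$ base case immediate since $C(T^{\epsilon}_S,\pi_S)=0$, and split the inductive step into the cases $\mathsf{CP}(\pi_S) \leq 1/2$ and $\mathsf{CP}(\pi_S) > 1/2$. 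The whole point is that the extra $(1-\epsilon)$ factor appearing in each approximate lemma cancels the $1/(1-\epsilon)$ carried by the enlarged constant, so the arithmetic closes at exactly $12/(1-\epsilon)$ rather than degrading further.

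In case one ($\mathsf{CP}(\pi_S) \leq 1/2$), I would follow the original chain up to the point where, after peeling off the logarithmic term, the remaining contribution of the root question $q_i$ is $c_i - (12/(1-\epsilon))\,C^*(S)\,\Delta_i(S,\pi_S)$. Applying Corollary~\ref{epshrinkcor} gives $\Delta_i(S,\pi_S) \geq c_i (1-\epsilon)(1-\mathsf{CP}(\pi_S))/C^*(S)$; the $(1-\epsilon)$ here cancels the $1/(1-\epsilon)$ in the constant, leaving exactly the term $-12 c_i (1-\mathsf{CP}(\pi_S))$ that appears in Theorem~\ref{submainthm}. Since $\mathsf{CP}(\pi_S) \leq 1/2$ makes this at most $-6 c_i \leq -c_i$, the leading $c_i$ is absorbed and the bound $C(T^{\epsilon}_S,\pi_S) \leq (12/(1-\epsilon))\,C^*(S)\,\ln(\pi(S)/\min(S))$ follows.

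Case two ($\mathsf{CP}(\pi_S) > 1/2$) is where I expect the real bookkeeping to live, and is the step I would treat most carefully. Using Lemma~\ref{approxgreedycutrlma} in place of Lemma~\ref{greedycutrlma}, the per-step mass reduction along the path to $h_0$ now satisfies $\pi(R_d) \leq \pi(R_0)\,e^{-(1-\epsilon)C_d/(2C^*(h_0))}$, which weakens the bound on the cost of identifying $h_0$ to $C_{D-1} \leq (2C^*(h_0)/(1-\epsilon))\,\ln(\pi(R_0)/\min(R_0))$. I would then re-check the final-question argument: if the last question had cost exceeding $2C^*(h_0)/(1-\epsilon)$, Lemma~\ref{approxgreedycutrlma} would force $\delta_i > 1$, a contradiction, so $C_D \leq (2C^*(h_0)/(1-\epsilon))(\ln(\pi(R_0)/\min(R_0))+1)$. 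Feeding this $C_D$ bound and the inductive hypothesis (now with constant $12/(1-\epsilon)$) into the same decomposition of $\pi(S)\,C(T^{\epsilon}_S,\pi_S)$ as before, and using $\ln(\pi(S)/\min(S)) > \ln 2 > 1/2$ together with the cost decomposition $\pi(S)\,C^*(S) = \pi(h_0)C^*(h_0) + \pi(R_0)C^*(R_0)$, collapses the right-hand side to $(12/(1-\epsilon))\,\pi(S)\,C^*(S)\,\ln(\pi(S)/\min(S))$. Dividing by $\pi(S)$ gives the claim.

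The main obstacle is one of careful accounting rather than new ideas: one must verify that the single factor $1/(1-\epsilon)$ introduced into the constant is exactly matched by the $(1-\epsilon)$ factors produced by Corollary~\ref{epshrinkcor} and Lemma~\ref{approxgreedycutrlma} at every place they are invoked---in case one's cancellation, in the exponential mass bound, and in the final-question cost bound of case two---so that no additional powers of $1/(1-\epsilon)$ accumulate and the final constant is exactly $12/(1-\epsilon)$.
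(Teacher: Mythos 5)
Your proposal is correct and follows essentially the same route as the paper: an induction on $|S|$ mirroring Theorem~\ref{submainthm}, substituting Corollary~\ref{epshrinkcor} and Lemma~\ref{approxgreedycutrlma} so that the $(1-\epsilon)$ factors they contribute exactly cancel the $1/(1-\epsilon)$ in the enlarged constant (in case one's cancellation, in the exponential mass bound, and in the final-question cost bound). The paper's own write-up is terser, simply citing "the same steps," but your accounting of where each $(1-\epsilon)$ appears and cancels is the right verification.
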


\begin{proof}
The proof follows very closely that of Theorem \ref{submainthm}, and we
use the same notation.  We again use induction on $|S|$, and the
base case holds trivially.

\noindent \underline{\textbf{Case one:} $\mathsf{CP}(\pi_S) \leq 1/2$}

Using the inductive hypothesis and
the same steps as in Theorem \ref{submainthm} one can show
\begin{eqnarray*}
C(T^{\epsilon}_S, \pi_S)
& \leq & c_i + \frac{12}{(1-\epsilon)} C^*(S) \ln \frac{\pi(S)}{\min(S)} -
\frac{12}{(1-\epsilon)} C^*(S) c_i \frac{(1 - \sum_{j \in A}\pi_S(S^j)^2))}{c_i} \\
\end{eqnarray*}
$(1 - \sum_{j \in A}\pi_S(S^j)^2)$ is 
$\Delta_i(S, \pi_S)$, so using Corollary \ref{epshrinkcor} and
$\mathsf{CP}(\pi_S) \leq 1/2$.
\begin{eqnarray*}
C(T^{\epsilon}_S, \pi_S)
& \leq & c_i + \frac{12}{(1-\epsilon)} C^*(S) \ln \frac{\pi(S)}{\min(S)}
- \frac{12}{(1-\epsilon)} C^*(S) c_i (1-\epsilon) 
\frac{1 - \mathsf{CP}(\pi_S)}{C^*(S)} \\
& = & c_i + \frac{12}{(1-\epsilon)} C^*(S) \ln \frac{\pi(S)}{\min(S)}
- 12 (1 - \mathsf{CP}(\pi_S)) c_i \\
& \leq & \frac{12}{(1-\epsilon)} C^*(S) \ln \frac{\pi(S)}{\min(S)} \\
\end{eqnarray*}
which completes this case.

\noindent \underline{\textbf{Case two:} $\mathsf{CP}(\pi_S) > 1/2$}

Using Lemma \ref{approxgreedycutrlma} and the same steps and notation as in 
Theorem \ref{submainthm}
\begin{eqnarray*}
\pi(R_d) & \leq & \pi(R_0) e^{-C_d (1-\epsilon) / (2C^*(h_0))} 
\end{eqnarray*}

Using this bound, we can again bound $C_D$, the cost of identifying
$h_0$. 
\begin{eqnarray*}
C_{D} & \leq & \frac{2}{(1-\epsilon)} C^*(h_0) \ln\frac{\pi(R_0)}{\min(R_0)} + 
\frac{2}{(1-\epsilon)} C^*(h_0) \\
\end{eqnarray*}

The remainder of the case follows the same steps as Theorem \ref{submainthm}. 
\end{proof}

\section{Related Work}

\begin{table*}[t]
\begin{center}
\begin{small}
\begin{tabular}{|c|llll|}
\hline 
& $k>2$ & Non uniform $c_i$ & Non uniform $\pi$ & Result \\
\hline 
\citet{optimalsplit} & Y & N & Y & $O(\log n)$ \\
\citet{greedy} & N & N & Y & $O(\log (1/\min_h \pi(h)))$ \\
\citet{approxoptimal} & N & Y & N & $O(\log n)$ \\
\citet{decisiontreesentity} & Y & N & Y & $O(\log k \log n)$ \\
\citet{approxdecision} & Y & N & N & $O(\log n)$ \\
This paper & Y & Y & Y & $O(\log (1/\min_h \pi(h)))$ \\
This paper & Y & Y & Y & $O(\log (n \max_i c_i / \min_i c_i))$ \\
\hline 
\end{tabular}
\end{small}
\end{center}
\caption{Summary of approximation ratios achieved by 
related work.  Here $n$ is the number of objects,
$k$ is the number of possible responses, $c_i$ are the question costs,
and $\pi$ is the distribution over objects.}
\label{prevwork}
\end{table*}

Table \ref{prevwork} summarizes previous results analyzing greedy
approaches to this problem.  A number of these results were derived
independently in different contexts.  Our work gives the first
approximation result for the general setting in which there are more
than two possible responses to questions, non uniform question costs,
and a non uniform distribution over objects.  We give bounds for two
algorithms, one with performance independent of the query costs and
one with performance independent of the distribution over objects.
Together these two bounds match all previous bounds for
less general settings.  We also note that \citet{optimalsplit} only mention
an extension to non binary queries (Remark 1), and our work is
the first to give a full proof of an $O(\log n)$ bound for the case of non
binary queries and non uniform distributions over objects..

Our work and the work we extend are examples of exact active learning.
We seek to exactly identify a target hypothesis from a finite set
using a sequence of queries.  Other work considers active learning
where it suffices to identify with high probability a hypothesis close
to the target hypothesis \citep{generalagnostic, importance}.  The exact and
approximate problems can sometimes be related \citep{teaching}.

Most theoretical work in active learning assumes unit costs and simple
label queries.  An exception, \citet{costcomp}
also considers a general learning framework in
which queries are arbitrary and have known costs associated with them.
In fact, the setting used by \citet{costcomp} is more general in that 
questions are allowed to have more than one valid answer for each
hypothesis.  \citet{costcomp} gives worst-case upper and lower bounds in
terms of a quantity called the General Identification Cost and related
quantities.  There are interesting parallels between our average-case
analysis and this worst-case result.

Practical work incorporating costs in active learning 
\citep{activereal, activeroi} has also considered methods
that maximize a benefit-cost ratio similar in spirit to the
method used here.  However, \citet{activereal} suggests
this strategy may not be sufficient for practical cost savings.

\section{Implications}

We briefly discuss the implications of our result in terms of the
motivating applications.  

For the active learning applications, our result shows that the
cost-sensitive greedy algorithm approximately minimizes cost compared
to any other deterministic strategy using the same set of queries.
For the the batch learning setting, if we create a question corresponding
to each subset of the dataset, then the resulting greedy strategy does
approximately as well as any other algorithm that makes a sequence of
batch label queries.  This result holds no matter how we assign costs
to different queries although restrictions may need to be made in
order to ensure computing the greedy strategy is feasible.  Similarly,
for the partial label query setting, the greedy strategy is
approximately optimal compared to any other active learning algorithm
using the same set of partial label queries. 

In the information retrieval domain, our result shows that when the
cost of a question is set to be the computational cost of determining
which branch an object is in, the resulting greedy query tree is
approximately optimal with respect to expected search time.  Although
the result only holds for expected search time and for searches for
objects in the tree (i.e. point location queries), the result is very
general.  In particular, it makes no restriction on the type of splits
(i.e. questions) used in the tree, and the result therefore applies to
many kinds of search trees.  In this application, our result
specifically improves previous results by allowing for arbitrary mixing of
different kinds of splits through the use of costs.

Finally, in the compression domain, our result shows gives a bound
on expected code length for top-down greedy code construction.  Top-down
greedy code construction is known to be suboptimal, but our result shows
it is approximately optimal and generalizes previous bounds.

\section{Open Problems}

\citet{decisiontreesentity} show it is NP-hard to approximate the
optimal query tree within a factor of $\Omega (\log n)$ for binary
queries and non uniform $\pi$.  This hardness result is with respect
to the number of objects.  Some open questions remain.  For the more
general setting with non uniform query costs, is there an algorithm
with an approximation ratio independent of both $\pi$ and $c_i$?  The
simple rounding technique we use seems to require dependence on $c_i$, but a
more advanced method could avoid this dependence.  Also, can the
$\Omega (\log n)$ hardness result be extended to the more restrictive
case of uniform $\pi$?  It would also be interesting to extend our
analysis to allow for questions to have more than one valid answer for
each hypothesis.  This would allow queries which ask for a positively
labeled example from a set of examples.  Such an extension appears
non trivial, as a straightforward extension assuming the given answer
is randomly chosen from the set of valid answers produces a tree in which
the mass of hypotheses is split across multiple branches, affecting
the approximation.

Much work also remains in the analysis of 
other active learning settings with general queries and costs.  
Of particular practical interest are extensions to agnostic
algorithms that converge to the correct hypothesis under no
assumptions \citep{generalagnostic, importance}.  Extensions to 
treat label costs, partial label queries, and batch mode active learning
are all of interest, and these learning algorithms could potentially
be extended to treat these three sub problems at once using a similar
setting.  

For some of these algorithms, even without modification
we can guarantee the method does no worse than passive learning with
respect to label cost.  In particular, \citet{generalagnostic} and
\citet{importance} both give algorithms that iterate through $T$
examples, at each step requesting a label with probability $p_t$.
These algorithm are shown to not do much worse (in terms of
generalization error) than the passive algorithm which requests every
label.  Because the algorithm queries for labels for a subset of $T$
i.i.d. examples, the label cost of the algorithm is also no worse than
the passive algorithm requesting $T$ random labels.  It remains an
open problem however to show these algorithms can do better than
passive learning in terms of label cost (most likely this will require
modifications to the algorithm or additional assumptions).
\begin{small}
\bibliography{activemulticost}  \bibliographystyle{abbrvnat}
\end{small}
\end{document}